\documentclass{article}
\usepackage{iclr2027_conference,times}
\usepackage{amsmath,amssymb,amsthm}
\usepackage{graphicx}
\usepackage{booktabs}
\usepackage{hyperref}
\usepackage{url}

\newcommand{\method}{TreeSpark}
\newcommand{\ours}{\method}
\newcommand{\dspark}{DSpark}
\newcommand{\hdr}[2]{\begin{tabular}[c]{@{}c@{}}#1\\#2\end{tabular}}
\newcommand{\sg}{\operatorname{sg}}

\newcommand{\pa}{\operatorname{pa}}

\newcommand{\ind}{\mathbb{I}}
\newcommand{\E}{\mathbb{E}}
\newcommand{\cT}{\mathcal{T}}
\newcommand{\cF}{\mathcal{F}}
\newcommand{\cC}{\mathcal{C}}
\newcommand{\cL}{\mathcal{L}}
\newtheorem{proposition}{Proposition}

\title{TreeSpark: Learning to Draft the Trees You Verify for Speculative Decoding}

\author{%
\textbf{Jihao Zhao\textsuperscript{1},
Qianyu Jiang\textsuperscript{2},
Shuaishuai Zu\textsuperscript{1},
Pengfei Li\textsuperscript{3},}
Guanghuan Fang\textsuperscript{3},
Ben Feng\textsuperscript{3},\\
\textbf{Biao Qin\thanks{Corresponding authors}\hspace{0.6em}\textsuperscript{1},
Zhibin Wang\footnotemark[1]\hspace{0.6em}\textsuperscript{2},
Baochuan Yang\footnotemark[1]\hspace{0.6em}\textsuperscript{3}
}\\
\textsuperscript{1} School of Information, Renmin University of China\\
\textsuperscript{2} National Key Lab for Novel Software Technology, Nanjing University\\
\textsuperscript{3} Huawei Technologies Ltd.
}

\iclrfinalcopy
\begin{document}
\maketitle
\lhead{Preprint}

\begin{abstract}
Parallel drafters such as DFlash and DSpark reduce proposal latency, but early mismatches limit how much of each block can be accepted. Retaining alternatives introduces parent states and candidate choices that reference-conditioned training does not directly supervise. We introduce \method, a unified training and inference framework extending the semi-autoregressive DSpark architecture. It constructs parent-conditioned candidate trees from one backbone pass and replays their expansion, pruning, and budgeted packing during training. Survival negative log-likelihood weights retained edges by reference-derived path mass, while a capped coverage hinge promotes target-important alternatives near local selection boundaries. Both objectives reuse reference-chain target distributions without additional target forward passes. We evaluate greedy decoding with Qwen3-4B and Qwen3-8B on nine general benchmarks and six writing domains. Across targets and suites, \method improves mean acceptance length over DSpark by 23.4\% to 31.6\% and mean single-request throughput by 15.1\% to 25.0\%. Batched Qwen3-4B throughput improves by 12.6\% to 17.2\%. At fixed attention capacity and inference policy, the proposed supervision further improves mean acceptance length by approximately 6.4\% and mean system throughput by 4.2\% to 5.0\% on Qwen3-4B. Our code is available at \url{https://github.com/Robot2050/TreeSpark}.
\end{abstract}

\section{Introduction}
\label{sec:intro}

Speculative decoding reduces the sequential cost of language model inference by separating inexpensive proposal generation from parallel target verification \citep{xia2023specdec,leviathan2023fast,chen2023speculative}. Its efficiency depends on accepted length relative to drafting and verification costs \citep{xia2024survey}. Prior work develops feature-level autoregressive drafting \citep{li2024eagle} and blockwise parallel prediction \citep{stern2018blockwise}. DFlash combines target-feature conditioning with block diffusion to predict draft tokens in one backbone pass \citep{chen2026dflash}. DSpark retains parallel backbone computation and adds lightweight sequential correction for adjacent-token dependencies \citep{cheng2026dspark}. We study how coordinated training and inference can turn this inexpensive block computation into longer accepted continuations.

Low drafting latency alone does not ensure efficient use of a predicted block. With a single proposed continuation, the first rejection discards the remaining suffix regardless of its local prediction quality \citep{leviathan2023fast,chen2023speculative}. For a parallel drafter, computation has already been spent on every block position, but later predictions contribute only when the required prefix remains available. DSpark models adjacent-token dependencies, but selecting one continuation still excludes alternatives represented in its distributions. Its correction head is trained on reference parent tokens and encounters model-generated parents at deployment \citep{cheng2026dspark}. Accurate reference predictions therefore need not yield useful candidates under the deployed policy. Improving acceptance requires considering both the available continuations and the supervision that shapes them.

A budgeted candidate tree retains alternatives while reusing parallel computation. Tree verification evaluates continuations with shared prefixes in one target pass \citep{miao2024specinfer,cai2024medusa}, with candidate allocation constrained by verification cost \citep{chen2024sequoia}. In the semi-autoregressive architecture, shared backbone outputs support lightweight parent-dependent corrections. This extension creates a specific learning requirement: each depth-specific prediction must support several proposed parents, and only a subset of their children receives verification positions. Reference-token losses do not directly supervise this allocation. Useful alternatives may fall below local selection boundaries or lose their prefixes during pruning. Training must therefore account for both local prediction quality and the selection decisions that determine complete-path availability.

Draft-generated data and online distillation improve alignment with inference-time continuations and query distributions \citep{zhou2024distillspec,liu2024online}. HASS and EAGLE-3 expose training to simulated inference states \citep{zhang2025hass,li2025eagle3}, while LK losses optimize speculative acceptance directly \citep{samarin2026lk}. Our focus is the additional alignment required when a semi-autoregressive drafter retains multiple continuations. The correction head must learn on proposed parents, reinforce retained paths, and promote useful alternatives under the deployed budget. Sharing the backbone and reusing reference target distributions make this feasible without repeated transformer computation or new target evaluations on every branch.

We introduce \method, which integrates parent-conditioned candidate generation with training under the deployed policy (Figure~\ref{fig:framework}). Training replays expansion, pruning, and packing with the current drafter while retaining reference distillation. Survival weights retained-edge likelihood by reference-derived path mass; capped coverage promotes target-important alternatives near local selection boundaries, including omitted candidates. Both losses update the shared backbone and Markov head on model-generated parents, with discrete selection held fixed. Reference target distributions are reused across branches. Deployment follows the same candidate policy with greedy verification and one draft backbone pass per round. The deployment policy defines the states and selections used for supervision. Learning can therefore improve the candidates presented to a fixed verifier without introducing a separate scoring model at inference.

The main contributions are:
\begin{itemize}
 \item A unified extension of semi-autoregressive drafting that retains alternative continuations with one backbone pass and trains on model-generated parent states by replaying the deployed expansion, pruning, and packing policy.
 \item Complementary survival and coverage objectives that weight retained-edge likelihood by reference-derived path mass and promote useful alternatives near local selection boundaries, reusing reference target distributions without additional target forward passes.
 \item Evaluations against DFlash and DSpark on two Qwen3 targets, with mean single-request throughput gains of 15.1\% to 25.0\% over DSpark. Controls at fixed attention capacity and inference policy isolate the additional benefit of supervision.
\end{itemize}

\section{Related Work}
\label{sec:related}

\paragraph{Parallel and semi-autoregressive drafting.}
Blockwise parallel decoding predicts several positions before validating their longest usable prefix \citep{stern2018blockwise}. EAGLE instead drafts target features autoregressively \citep{li2024eagle}. DFlash combines target-feature conditioning with a lightweight block diffusion drafter, reducing proposal generation to one backbone pass \citep{chen2026dflash}. DSpark augments this structure with sequential token correction and confidence-based verification scheduling \citep{cheng2026dspark}. These methods motivate our focus on increasing accepted length while retaining parallel backbone computation. \method\ builds on the DSpark Markov-head architecture and couples the generation of alternative continuations with supervision on the parent states and candidate selections used at deployment.

\paragraph{Training and inference alignment.}
DistillSpec studies draft-generated data and decoding-specific distillation \citep{zhou2024distillspec}, while Online Speculative Decoding updates the drafter from verification feedback \citep{liu2024online}. HASS aligns training with speculative rollout \citep{zhang2025hass}, EAGLE-3 simulates inference during training \citep{li2025eagle3}, and LK losses directly optimize speculative acceptance \citep{samarin2026lk}. DFlash also aligns masked-block construction with its inference inputs. \method\ extends alignment to the candidate decisions introduced by branching: survival supervises retained paths, and coverage addresses alternatives near the admission boundary. Both objectives reuse reference target distributions while training the parallel backbone and correction head on proposed parent states.

\paragraph{Candidate construction and verification.}
SpecInfer and Medusa establish efficient verification of multiple continuations \citep{miao2024specinfer,cai2024medusa}. EAGLE-2 and OPT-Tree adapt candidate allocation \citep{li2024eagle2,wang2025opttree}; Sequoia and CaDDTree account for verification efficiency and cost \citep{chen2024sequoia,zhang2026caddtree}. DDTree constructs trees from block-diffusion marginals, TAPS learns a target-aware scorer for prefix selection, and JetSpec trains a causal parallel drafter by distillation \citep{ringel2026ddtree,wang2026taps,hu2026jetspec}. \method\ retains the DSpark semi-autoregressive architecture and replays deployment selections during training, coupling likelihood supervision on retained paths with local admission signals for useful alternatives.

\section{Aligning Parallel Drafting with Verification}
\label{sec:method}

\begin{figure}[t]
 \centering
 \includegraphics[width=\linewidth]{TreeSpark_framework_refined_final_01.jpg}
 \caption{\textbf{\method\ framework.} (a) Budgeted drafting and verification. (b) Coverage at expanded parents. (c) Survival on valid packed edges, weighted by reference path mass. Both objectives reuse reference target distributions. Teal nodes are packed; double outlines mark the frontier. The schematic uses $k=2$; experiments use $k=4$. Panel (c) illustrates one supervised path. Equation~\ref{eq:draft} defines parent conditioning. The feedback loop updates draft parameters; global packing follows depthwise expansion.}
 \label{fig:framework}
\end{figure}

\method\ extends the DSpark parallel backbone and lightweight correction head through coordinated training and inference (Figure~\ref{fig:framework}). Inference retains alternative continuations in a bounded tree, and training replays the same candidate policy on model-generated parent states. Survival supervises retained edges, while coverage addresses local candidate admission before packing. The joint objective updates the backbone and Markov head with discrete selection held fixed. We first define candidate generation, then describe its replay and supervision.

\subsection{Parent-Conditioned Tree Drafting}
\label{sec:tree}

Let $x$ denote the committed prefix, whose final token is the root anchor $\rho$, and let $P$ denote the frozen target model. We instantiate the semi-autoregressive drafter with the DSpark backbone and Markov head, applying the correction separately to each frontier parent. A single backbone pass produces vocabulary logits $\ell_d(x)$ for each draft depth $d\in\{1,\ldots,B\}$. For a parent $u$ carrying token $t_u$, a rank-$r$ correction yields
\begin{equation}
 z_d(u)=\ell_d(x)+W_\theta e_\theta(t_u),
 \qquad
 q_{\theta,d}(\cdot\mid u,x)=\operatorname{softmax}\!\left(z_d(u)\right).
 \label{eq:draft}
\end{equation}
Here $\mathcal V$ is the vocabulary and $\theta$ collects trainable draft parameters. The Markov embedding $e_\theta(t_u)\in\mathbb{R}^r$ and projection $W_\theta\in\mathbb{R}^{|\mathcal V|\times r}$ are distinct from the frozen backbone input embeddings and output projection. All parents at a given depth share the backbone prediction but receive token-dependent corrections. Branching therefore requires additional lightweight head evaluations without repeating the backbone.

Tree construction uses a local branching factor $k$, frontier width $k$, and budget $N$ of non-root nodes. Starting from the anchor, each frontier parent proposes its top-$k$ children. Every child enters a candidate pool and receives the product of draft conditional probabilities along its path as its score. The $k$ highest-scoring children at that depth form the next frontier. After $B$ depths, global packing retains up to $N$ highest-scoring nodes from the pool, preferring shallower nodes when scores tie. Frontier pruning bounds expansion cost, whereas global packing determines the input to target verification.

The resulting tree contains every ancestor of each selected node: a child has no larger path probability than its parent, and the tie rule places the parent first. Tree attention then lets each node attend to the committed prefix and its own ancestor path. Greedy verification follows children that match the target argmax and emits a target-selected bonus token when the matching path ends. Under identical per-path target evaluations and argmax tie rules, this procedure preserves autoregressive greedy output for any such tree. Appendix~\ref{app:verification} gives the construction and correctness arguments.

\subsection{Training under the Deployment Policy}
\label{sec:replay}

Reference training provides anchor contexts, continuations $y_{1:B}$, and aligned target distributions $\bar p_d=P(\cdot\mid x,y_{<d})$. We retain the baseline reference losses and replay the tree policy on a subset of training anchors with matching $B$, $k$, and $N$. The replay uses the current draft parameters, so each correction head is evaluated on model-selected parent tokens. The selected subtree consequently reflects both the model state and the verification budget.

Replay separates discrete selection from differentiable prediction. Token identities, frontier choices, packing decisions, and supervision weights are detached. The conditional probabilities and logits used in the losses remain differentiable through the backbone and correction head. Each subsequent update reconstructs the candidate trees. This yields a direct supervised update conditional on the current tree selection.

All parents at depth $d$ reuse $\bar p_d$, obtained by applying softmax to the aligned reference target logits. These labels are exact for the reference context and serve as proxies on alternative branches, avoiding additional target evaluations. Replay therefore matches the parent-conditioned expansion and budget policy while retaining reference-based target supervision. Numerical selection details, including tie handling, are specified in Appendix~\ref{app:numerics}; the full training procedure appears in Appendix~\ref{app:training}.

\subsection{Survival and Coverage Supervision}
\label{sec:objectives}

\paragraph{Survival on the selected subtree.}
Useful depth requires an entire proposed prefix to remain available. We therefore weight a retained edge by the reference-derived probability mass of its full path. For node $v$, let $m_v$ be the product of reference target probabilities along its proposed path, excluding the already committed anchor, and let $q_v$ denote its draft conditional probability from Equation~\ref{eq:draft}. If $\mathcal E$ collects valid non-root nodes packed across the replayed anchors, the survival negative log-likelihood is
\begin{equation}
 \cL_{\mathrm{surv}}
 =-\frac{\sum_{v\in\mathcal E}m_v\log q_v}
          {\sum_{v\in\mathcal E}m_v+\epsilon}.
 \label{eq:survival}
\end{equation}
The weights $m_v$ are fixed during differentiation, and $\epsilon>0$ stabilizes the denominator. Path weighting reduces the influence of descendants reached through prefixes with little reference target mass, while the packing restriction focuses explicit likelihood supervision on candidates that occupy verification slots. Normalization by total mass keeps the objective on a comparable scale across replayed trees.

For a fixed selected child set, this loss moves the parent distribution toward the relative reference mass of those children. It supplies no positive likelihood target for an omitted child, even when the target assigns that token substantial probability. Coverage supervision addresses this missing signal. Appendix~\ref{app:analysis} derives the local gradient and relates path mass to the expected depth covered by a fixed tree. This relation motivates the weighting; the implemented loss remains a surrogate for greedy acceptance.

\paragraph{Coverage near the local selection boundary.}
At an expanded parent $u$, let $b_d(u)$ be the $k$th largest draft logit and let $J_d$ contain the $J$ most probable tokens under $\bar p_d$. We define
\begin{equation}
 h_d(u)=\sum_{i\in J_d}\bar p_d(i)
 \min\!\left\{c,\,
       \left[\delta-z_d(u)_i+b_d(u)\right]_+\right\},
 \label{eq:hinge}
\end{equation}
where $[a]_+=\max(0,a)$, $\delta>0$ is a margin, and $c>0$ caps the penalty. The boundary measures whether a token favored by the target can compete for a local candidate slot. Within the active margin, the loss penalizes insufficient separation from that boundary without specifying an ordering among the target top-$J$ tokens. The cap limits the influence of large violations, concentrating gradients in a bounded neighborhood of the selection threshold.

We aggregate $h_d(u)$ over all valid expanded parents, with exponential depth decay, and normalize by the correspondingly weighted target top-$J$ mass to obtain $\cL_{\mathrm{cov}}$. Differentiation includes both the token logit and boundary logit. Appendix~\ref{app:training} gives the full reduction; Appendix~\ref{app:analysis} characterizes the active gradient interval and the cancellation when a supervised token itself defines the boundary.

The objectives operate at complementary stages of selection. Coverage supervises expanded parents before the final node budget is applied, allowing omitted alternatives to receive a training signal when they fall within the active margin. Survival supervises the edges retained after global packing. Their combination connects local candidate admission with likelihood learning on the subtree actually allocated verification positions.

\subsection{Training and Inference}
\label{sec:execution}

The complete objective augments reference distillation:
\begin{equation}
 \cL=\cL_{\mathrm{base}}
     +\lambda_s\cL_{\mathrm{surv}}
     +\lambda_c\cL_{\mathrm{cov}}.
 \label{eq:total}
\end{equation}
The baseline combines reference-token cross-entropy, distributional $L_1$ matching, and the inherited confidence loss. These terms retain supervision beyond the currently selected branches. The confidence head is auxiliary to training and does not control the fixed tree budget. We also record mass coverage and reference argmax path survival on the packed tree to inspect how its candidates change during training; their definitions are provided in Appendix~\ref{app:diagnostics}.

Our configuration uses $B=7$, $k=4$, and $N=32$, giving up to 100 candidates before packing and at most 33 target input positions including the anchor. Replay uses up to 128 anchors per sequence, with checkpointing to control activation memory. Inference batches frontier expansion and verification across requests, retaining one draft backbone pass followed by $B$ lightweight correction stages. The tree objectives add no target evaluations during training or model components at inference.

Tree replay requires a shared backbone prediction to support several parent-conditioned branch decisions. To increase the representational capacity available for learning these decisions, we widen the draft attention head dimension from 128 to 192 while retaining the layer count and tree policy. This expands the backbone attention projections; the Markov correction rank remains unchanged. Section~\ref{sec:ablation} evaluates this capacity choice separately from the two objectives, and Appendix~\ref{app:implementation} specifies its computational implications.

\section{Experiments}
\label{sec:experiments}

We evaluate whether the proposed training and inference extension improves acceptance and throughput over DFlash and DSpark. A reference-trained branching control and component ablations examine the contributions of deployment-aligned supervision and attention capacity.

\begin{table}[t]
\caption{Single-request throughput (tokens/s): (a) reasoning, code, and dialogue; (b) writing. Vanilla denotes autoregressive decoding. Mean averages benchmarks. $\Delta$ compares each benchmark with DSpark; its Mean entry averages these gains. Bold marks column maxima.}
\label{tab:throughput-c1}
\centering
\footnotesize
\setlength{\tabcolsep}{2pt}
\begin{tabular*}{\textwidth}{@{\extracolsep{\fill}}lcccccccccc@{}}
\toprule
\multicolumn{11}{@{}l}{\emph{(a) Reasoning, code, and dialogue}} \\
\midrule
 & GSM8K & \hdr{MATH}{500} & \hdr{AIME}{25} & \hdr{Human}{Eval} & MBPP & \hdr{LiveCode}{Bench} & \hdr{MT-}{Bench} & Alpaca & \hdr{Arena}{Hard-v2} & Mean \\
\midrule
\multicolumn{11}{@{}l}{\emph{Qwen3-4B}} \\
Vanilla & 12.83 & 11.66 & 10.82 & 12.80 & 13.13 & 11.53 & 12.99 & 12.78 & 11.72 & 12.25 \\
EAGLE-3 & 25.20 & 23.27 & 21.29 & 20.75 & 21.19 & 22.89 & 16.06 & 15.51 & 16.03 & 20.24 \\
DFlash & 46.66 & 43.48 & 37.48 & 35.36 & 36.59 & 42.88 & 24.21 & 22.87 & 23.58 & 34.79 \\
\dspark{} & 49.74 & 47.29 & 41.22 & 39.39 & 37.22 & 42.06 & 24.30 & 23.06 & 24.05 & 36.48 \\
\ours{} & \textbf{55.58} & \textbf{52.12} & \textbf{48.21} & \textbf{45.38} & \textbf{44.06} & \textbf{46.43} & \textbf{28.65} & \textbf{27.16} & \textbf{28.32} & \textbf{41.77} \\
$\Delta$ (\%) & +11.7 & +10.2 & +17.0 & +15.2 & +18.4 & +10.4 & +17.9 & +17.8 & +17.8 & +15.1 \\
\midrule
\multicolumn{11}{@{}l}{\emph{Qwen3-8B}} \\
Vanilla & 12.80 & 12.29 & 11.74 & 13.56 & 13.33 & 12.03 & 13.30 & 13.20 & 12.05 & 12.70 \\
EAGLE-3 & 38.53 & 36.93 & 32.35 & 30.33 & 29.15 & 34.65 & 19.28 & 18.02 & 18.98 & 28.69 \\
DFlash & 45.68 & 42.43 & 36.03 & 34.67 & 33.93 & 39.80 & 22.22 & 20.88 & 22.39 & 33.11 \\
\dspark{} & 47.94 & 45.42 & 40.34 & 38.75 & 38.20 & 43.03 & 24.54 & 22.40 & 23.38 & 36.00 \\
\ours{} & \textbf{57.89} & \textbf{53.47} & \textbf{46.58} & \textbf{46.88} & \textbf{45.12} & \textbf{47.53} & \textbf{29.45} & \textbf{26.93} & \textbf{27.38} & \textbf{42.36} \\
$\Delta$ (\%) & +20.8 & +17.7 & +15.5 & +21.0 & +18.1 & +10.5 & +20.0 & +20.2 & +17.1 & +17.9 \\
\bottomrule
\end{tabular*}
\vspace{2mm}

\begin{tabular*}{\textwidth}{@{\extracolsep{\fill}}lccccccc@{}}
\toprule
\multicolumn{8}{@{}l}{\emph{(b) Writing}} \\
\midrule
 & \hdr{Academic}{\& Engineering} & \hdr{Advertising}{\& Marketing} & Education & \hdr{Finance}{\& Business} & \hdr{Literature}{\& Arts} & \hdr{Politics}{\& Law} & Mean \\
\midrule
\multicolumn{8}{@{}l}{\emph{Qwen3-4B}} \\
Vanilla & \phantom{0}9.81 & \phantom{0}9.69 & \phantom{0}9.74 & 10.16 & 10.20 & 10.28 & \phantom{0}9.98 \\
EAGLE-3 & 14.39 & 13.57 & 14.22 & 15.22 & 14.16 & 15.30 & 14.48 \\
DFlash & 20.24 & 18.61 & 19.77 & 21.28 & 18.54 & 20.46 & 19.82 \\
\dspark{} & 22.45 & 20.15 & 21.88 & 23.36 & 19.48 & 23.07 & 21.73 \\
\ours{} & \textbf{27.96} & \textbf{24.80} & \textbf{26.80} & \textbf{28.68} & \textbf{23.69} & \textbf{26.96} & \textbf{26.48} \\
$\Delta$ (\%) & +24.5 & +23.1 & +22.5 & +22.8 & +21.6 & +16.9 & +21.9 \\
\midrule
\multicolumn{8}{@{}l}{\emph{Qwen3-8B}} \\
Vanilla & 10.28 & 10.21 & 10.23 & 10.37 & 10.76 & 10.79 & 10.44 \\
EAGLE-3 & 17.78 & 16.64 & 17.78 & 19.29 & 16.22 & 18.35 & 17.68 \\
DFlash & 19.43 & 18.03 & 19.95 & 21.14 & 17.46 & 19.78 & 19.30 \\
\dspark{} & 21.65 & 19.89 & 21.30 & 22.61 & 18.58 & 21.04 & 20.84 \\
\ours{} & \textbf{27.07} & \textbf{24.78} & \textbf{26.90} & \textbf{28.41} & \textbf{22.94} & \textbf{26.26} & \textbf{26.06} \\
$\Delta$ (\%) & +25.0 & +24.6 & +26.3 & +25.7 & +23.5 & +24.8 & +25.0 \\
\bottomrule
\end{tabular*}
\end{table}

\begin{table}[t]
\caption{Acceptance length on reasoning, code, and dialogue tasks, including the target bonus when emitted. All methods allow seven draft depths; \method\ packs 32 non-root nodes. Mean averages benchmarks; its $\Delta$ compares the two means for \method\ and DSpark. Other $\Delta$ entries compare individual benchmarks. Bold marks column maxima. Writing results appear in Table~\ref{tab:writing-acceptance}.}
\label{tab:acceptance-length}
\centering
\footnotesize
\setlength{\tabcolsep}{2pt}
\begin{tabular*}{\textwidth}{@{\extracolsep{\fill}}lcccccccccc@{}}
\toprule
\multicolumn{11}{@{}l}{\emph{Reasoning, code, and dialogue}} \\
\midrule
 & GSM8K & \hdr{MATH}{500} & \hdr{AIME}{25} & \hdr{Human}{Eval} & MBPP & \hdr{LiveCode}{Bench} & \hdr{MT-}{Bench} & Alpaca & \hdr{Arena}{Hard-v2} & Mean \\
\midrule
\multicolumn{11}{@{}l}{\emph{Qwen3-4B}} \\
EAGLE-3 & 3.43 & 3.25 & 3.04 & 2.83 & 2.76 & 3.20 & 2.13 & 2.08 & 2.20 & 2.77 \\
DFlash & 4.97 & 4.95 & 4.42 & 3.91 & 3.76 & 4.82 & 2.52 & 2.41 & 2.59 & 3.82 \\
\dspark{} & 5.46 & 5.44 & 4.97 & 4.41 & 4.22 & 5.24 & 2.80 & 2.63 & 2.90 & 4.23 \\
\ours{} & \textbf{6.57} & \textbf{6.58} & \textbf{6.26} & \textbf{5.54} & \textbf{5.40} & \textbf{6.23} & \textbf{3.59} & \textbf{3.42} & \textbf{3.64} & \textbf{5.25} \\
$\Delta$ (\%) & +20.3 & +21.0 & +26.0 & +25.6 & +28.0 & +18.9 & +28.2 & +30.0 & +25.5 & +24.1 \\
\midrule
\multicolumn{11}{@{}l}{\emph{Qwen3-8B}} \\
EAGLE-3 & 4.86 & 4.78 & 4.21 & 3.76 & 3.58 & 4.65 & 2.22 & 2.03 & 2.24 & 3.59 \\
DFlash & 4.96 & 4.87 & 4.33 & 3.91 & 3.73 & 4.80 & 2.48 & 2.34 & 2.54 & 3.77 \\
\dspark{} & 5.53 & 5.41 & 4.89 & 4.39 & 4.20 & 5.24 & 2.77 & 2.57 & 2.80 & 4.20 \\
\ours{} & \textbf{6.61} & \textbf{6.56} & \textbf{5.96} & \textbf{5.57} & \textbf{5.35} & \textbf{6.27} & \textbf{3.55} & \textbf{3.32} & \textbf{3.47} & \textbf{5.18} \\
$\Delta$ (\%) & +19.5 & +21.3 & +21.9 & +26.9 & +27.4 & +19.7 & +28.2 & +29.2 & +23.9 & +23.4 \\
\bottomrule
\end{tabular*}
\end{table}

\subsection{Experimental Setup}
\label{sec:setup}

We use Qwen3-4B and Qwen3-8B targets \citep{yang2025qwen3} with greedy decoding. All drafters are trained for ten epochs on a common mixture of 50,000 prompts covering mathematics, code, dialogue, and writing, with a maximum training sequence length of 4,096 tokens. Evaluation retains complete input prompts and generates up to 2,048 tokens. The general suite comprises nine reasoning, coding, and dialogue benchmarks; WritingBench supplies six writing domains \citep{wu2025writingbench}. Writing training prompts come from Writer-R1 \citep{zhao2026writerr1} and are disjoint from WritingBench. Appendix~\ref{app:experimental-protocol} details the datasets and training settings.

DFlash and DSpark are the primary parallel-drafting baselines; autoregressive decoding and EAGLE-3 provide additional reference points. Following the DSpark drafter-depth configuration, EAGLE-3 uses one layer, while DFlash, DSpark, and \method\ use five. In our evaluation, EAGLE-3, DFlash, and DSpark each produce a single proposal path per round. All speculative methods allow up to seven draft depths. \method\ uses branching factor four and a budget of 32 non-root nodes, giving up to 33 target input positions per tree, compared with eight for a chain. Experiments run on Ascend 910B accelerators: one card for total concurrency $C\in\{1,6,12\}$, and three independent serving workers with 12 requests each for $C=36$.

Acceptance length counts committed tokens per verification round, including the target bonus when emitted. Because batching and KV-cache utilization also affect serving efficiency \citep{kwon2023pagedattention}, we measure system throughput as aggregate output tokens per second. Throughput results average three repeated measurements. Benchmarks receive equal weight. Throughput gains over chain DSpark average benchmark-specific percentage improvements; acceptance gains and throughput comparisons between tree configurations use ratios of suite means. Appendix~\ref{app:experimental-metrics} specifies the metrics and aggregation.

\subsection{Acceptance across Tasks and Draft Positions}
\label{sec:acceptance-results}

\method\ achieves the highest acceptance length on all nine general benchmarks for both targets (Table~\ref{tab:acceptance-length}). The mean rises from 4.23 to 5.25 for Qwen3-4B and from 4.20 to 5.18 for Qwen3-8B, improving by 24.1\% and 23.4\%, respectively. Relative gains are larger on writing: 2.68 to 3.52 and 2.55 to 3.35, corresponding to 31.4\% and 31.6\%; Appendix~\ref{app:writing-results} provides the complete domain breakdown. The improvement therefore extends to workloads where baseline accepted paths are substantially shorter, as well as to both target sizes.

\begin{figure}[t]
 \centering
 \includegraphics[width=\linewidth]{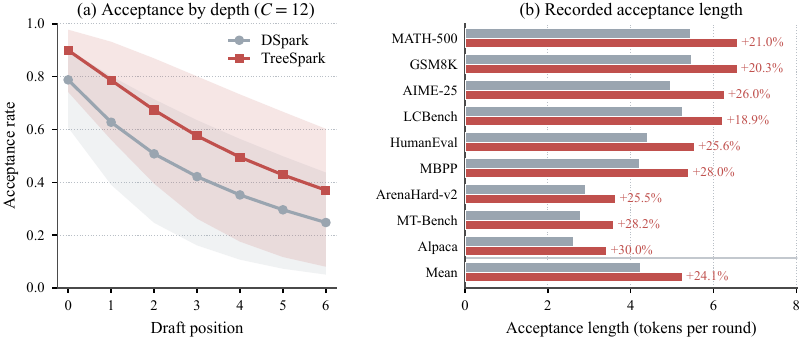}
 \caption{\textbf{Acceptance on Qwen3-4B.} Left: position-wise acceptance at $C=12$, averaged over nine benchmarks; shading spans their minimum and maximum. Right: recorded acceptance lengths from Table~\ref{tab:acceptance-length}, including the target bonus when emitted. The panels use separate evaluation records.}
 \label{fig:acceptance}
\end{figure}

Figure~\ref{fig:acceptance} examines how these gains extend through a proposal. At $C=12$, mean acceptance at the first draft position increases from 78.8\% to 90.0\%, while acceptance at the seventh increases from 24.7\% to 37.0\%. The improvement persists at every plotted depth. Since accepting a later token requires a matching prefix, this pattern is consistent with improved retention of useful continuations. These measurements evaluate the complete system; Figure~\ref{fig:summary} isolates the additional effect of supervision at fixed capacity and candidate policy. The right panel reports the same benchmark acceptance lengths as Table~\ref{tab:acceptance-length}, whose means are 4.23 and 5.25.

\subsection{Serving Efficiency and Training Alignment}
\label{sec:throughput-results}

The acceptance gains translate into higher throughput on every single-request benchmark (Table~\ref{tab:throughput-c1}). Relative to DSpark, average gains on the general suite are 15.1\% for Qwen3-4B and 17.9\% for Qwen3-8B; on writing they reach 21.9\% and 25.0\%. Thus, the additional accepted tokens compensate for the cost of constructing and verifying a branching proposal in these settings.

\begin{table}[t]
\caption{System throughput on Qwen3-4B (tokens/s) at total concurrency $C$. Serving configurations are specified in Section~\ref{sec:setup}. $\Delta$ is the gain over DSpark within each setting; its Mean entry averages per-benchmark gains.}
\label{tab:concurrency}
\centering
\footnotesize
\setlength{\tabcolsep}{2pt}
\begin{tabular*}{\textwidth}{@{\extracolsep{\fill}}lcccccccccc@{}}
\toprule
\multicolumn{11}{@{}l}{\emph{System throughput (tokens/s)}} \\
\midrule
 & GSM8K & \hdr{MATH}{500} & \hdr{AIME}{25} & \hdr{Human}{Eval} & MBPP & \hdr{LiveCode}{Bench} & \hdr{MT-}{Bench} & Alpaca & \hdr{Arena}{Hard-v2} & Mean \\
\midrule
\multicolumn{11}{@{}l}{\emph{$C=6$}} \\
Vanilla & \phantom{0}68.79 & \phantom{0}58.50 & \phantom{0}45.27 & \phantom{0}59.66 & \phantom{0}62.80 & \phantom{0}49.00 & \phantom{0}56.86 & \phantom{0}59.37 & \phantom{0}54.71 & \phantom{0}57.22 \\
\dspark{} & 235.51 & 223.87 & 183.69 & 173.60 & 167.44 & 181.85 & \phantom{0}92.84 & 106.23 & 118.01 & 164.78 \\
\ours{} & \textbf{256.65} & \textbf{256.29} & \textbf{203.90} & \textbf{201.66} & \textbf{179.66} & \textbf{208.52} & \textbf{127.50} & \textbf{124.15} & \textbf{141.94} & \textbf{188.92} \\
$\Delta$ (\%) & +9.0 & +14.5 & +11.0 & +16.2 & +7.3 & +14.7 & +37.3 & +16.9 & +20.3 & +16.3 \\
\midrule
\multicolumn{11}{@{}l}{\emph{$C=12$}} \\
Vanilla & 124.44 & 103.84 & \phantom{0}68.84 & \phantom{0}96.04 & 102.51 & \phantom{0}87.29 & \phantom{0}72.46 & \phantom{0}95.62 & 102.11 & \phantom{0}94.79 \\
\dspark{} & 341.53 & 374.50 & 274.46 & 258.41 & 250.87 & 284.77 & 139.06 & 173.52 & 195.02 & 254.68 \\
\ours{} & \textbf{381.60} & \textbf{435.55} & \textbf{314.90} & \textbf{308.42} & \textbf{281.77} & \textbf{316.64} & \textbf{186.24} & \textbf{209.82} & \textbf{223.15} & \textbf{295.34} \\
$\Delta$ (\%) & +11.7 & +16.3 & +14.7 & +19.4 & +12.3 & +11.2 & +33.9 & +20.9 & +14.4 & +17.2 \\
\midrule
\multicolumn{11}{@{}l}{\emph{$C=36$}} \\
Vanilla & \phantom{0}316.38 & \phantom{0}266.40 & 133.88 & 209.56 & 189.49 & 196.07 & 125.51 & 214.07 & 257.57 & 212.10 \\
\dspark{} & \phantom{0}948.63 & \phantom{0}944.46 & 431.36 & 527.65 & 550.19 & 641.40 & 239.36 & 413.14 & 473.90 & 574.45 \\
\ours{} & \textbf{1021.66} & \textbf{1086.23} & \textbf{457.30} & \textbf{652.25} & \textbf{554.28} & \textbf{747.13} & \textbf{258.87} & \textbf{521.53} & \textbf{518.70} & \textbf{646.44} \\
$\Delta$ (\%) & +7.7 & +15.0 & +6.0 & +23.6 & +0.7 & +16.5 & +8.2 & +26.2 & +9.5 & +12.6 \\
\bottomrule
\end{tabular*}
\end{table}

Under concurrent serving, \method\ improves system throughput on all nine benchmarks at each tested concurrency (Table~\ref{tab:concurrency}). Mean per-benchmark gains are 16.3\% at $C=6$, 17.2\% at $C=12$, and 12.6\% at $C=36$, each relative to DSpark within the corresponding serving configuration in Section~\ref{sec:setup}. Per-request decoding throughput also improves, by 15.5\%, 17.6\%, and 11.8\%, respectively (Appendix~\ref{app:request-results}). These results show that the benefit remains measurable when requests share accelerator execution.

\begin{figure}[t]
 \centering
 \includegraphics[width=\linewidth]{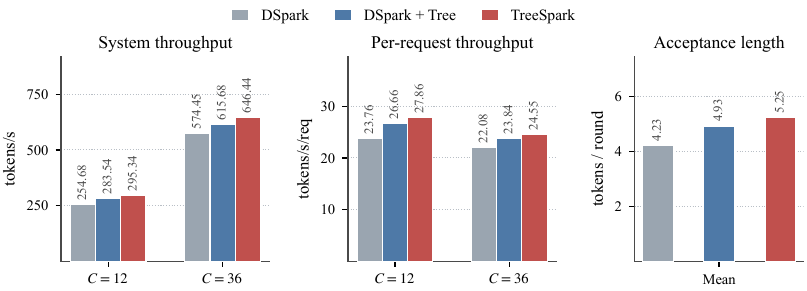}
 \caption{\textbf{Effect of training alignment on Qwen3-4B.} DSpark + Tree matches the capacity and inference policy of \method\ but omits both tree losses. All panels report means over nine benchmarks. DSpark and \method\ acceptance means match Table~\ref{tab:acceptance-length}.}
 \label{fig:summary}
\end{figure}

To separate the contribution of training from the inference extension, Figure~\ref{fig:summary} compares \method\ with DSpark + Tree. This control uses identical training and inference settings, including head dimension 192, except that both tree losses are disabled. Mean acceptance length increases from 4.93 to 5.25, a gain of approximately 6.4\%. Mean system throughput rises from 283.54 to 295.34 tokens/s at $C=12$ and from 615.68 to 646.44 at $C=36$, gains of 4.2\% and 5.0\%, respectively. Mean per-request throughput also increases at both concurrency levels. The improvement under a fixed candidate policy supports aligning draft learning with the inference extension. System throughput exceeds the control on six of nine benchmarks at each concurrency; Appendix~\ref{app:detailed-figures} provides the full breakdown.

\subsection{Contributions of Supervision and Capacity}
\label{sec:ablation}

\begin{table}[t]
\caption{Component ablations on Qwen3-4B at $C=12$ (system tokens/s). Loss ablations remove only the indicated objective; the capacity ablation uses head dimension 128 instead of 192. $\Delta$ compares each variant with DSpark; its Mean entry averages benchmark gains. Bold marks column maxima.}
\label{tab:ablation-c12}
\centering
\footnotesize
\setlength{\tabcolsep}{2pt}
\begin{tabular*}{\textwidth}{@{\extracolsep{\fill}}lcccccccccc@{}}
\toprule
\multicolumn{11}{@{}l}{\emph{System throughput (tokens/s)}} \\
\midrule
 & GSM8K & \hdr{MATH}{500} & \hdr{AIME}{25} & \hdr{Human}{Eval} & MBPP & \hdr{LiveCode}{Bench} & \hdr{MT-}{Bench} & Alpaca & \hdr{Arena}{Hard-v2} & Mean \\
\midrule
\dspark{} & 341.53 & 374.50 & 274.46 & 258.41 & 250.87 & 284.77 & 139.06 & 173.52 & 195.02 & 254.68 \\
\midrule
\ours{} & \textbf{381.60} & \textbf{435.55} & 314.90 & \textbf{308.42} & 281.77 & 316.64 & \textbf{186.24} & \textbf{209.82} & 223.15 & \textbf{295.34} \\
$\Delta$ (\%) & +11.7 & +16.3 & +14.7 & +19.4 & +12.3 & +11.2 & +33.9 & +20.9 & +14.4 & +17.2 \\
\midrule
w/o $\cL_{\mathrm{surv}}$ & 364.20 & 412.68 & 297.14 & 297.21 & 281.76 & 292.01 & 171.53 & 200.31 & 220.20 & 281.89 \\
$\Delta$ (\%) & \phantom{0}+6.6 & +10.2 & \phantom{0}+8.3 & +15.0 & +12.3 & \phantom{0}+2.5 & +23.3 & +15.4 & +12.9 & +11.9 \\
\midrule
w/o $\cL_{\mathrm{cov}}$ & 373.38 & 432.72 & 317.94 & 300.38 & \textbf{290.83} & \textbf{321.24} & 172.76 & 207.75 & 223.21 & 293.36 \\
$\Delta$ (\%) & \phantom{0}+9.3 & +15.5 & +15.8 & +16.2 & +15.9 & +12.8 & +24.2 & +19.7 & +14.5 & +16.0 \\
\midrule
Head dim.\ 128 & 369.73 & 424.70 & \textbf{319.65} & 301.80 & 289.33 & 302.82 & 171.09 & 203.91 & \textbf{223.69} & 289.64 \\
$\Delta$ (\%) & \phantom{0}+8.3 & +13.4 & +16.5 & +16.8 & +15.3 & \phantom{0}+6.3 & +23.0 & +17.5 & +14.7 & +14.6 \\
\bottomrule
\end{tabular*}
\end{table}

Table~\ref{tab:ablation-c12} separates the two tree objectives and attention capacity at $C=12$. Each loss ablation retains head dimension 192 and removes only the indicated term. Without survival supervision, the mean gain over DSpark falls from 17.2\% to 11.9\%; without coverage, it falls to 16.0\%. Survival contributes the larger aggregate effect, while coverage adds a smaller improvement consistent with its role near local selection boundaries. Individual benchmarks sometimes favor an ablation, so the evidence supports complementarity in aggregate rather than uniform gains from each term.

With both objectives and the tree policy fixed, increasing head dimension from 128 to 192 raises mean system throughput from 289.64 to 295.34 tokens/s, a 2.0\% improvement. Mean per-request throughput increases by 2.1\% (Appendix~\ref{app:request-results}). These results support allocating additional attention capacity to tree-supervised drafting. The 128-dimensional variant still improves over DSpark by 14.6\%, while the capacity-matched comparison in Section~\ref{sec:throughput-results} establishes an additional benefit from tree supervision itself. The full configuration achieves the highest mean throughput in both ablation tables.

\section{Conclusion}
\label{sec:conclusion}

\method\ demonstrates how a deployment policy can guide the training of a parallel drafter. Training on model-generated parent states under the deployed selection policy directs reference supervision toward the continuations available for target verification. The controlled comparison yields longer mean accepted paths and higher mean throughput at fixed attention capacity and inference policy. This evidence supports treating candidate admission and prefix retention as integral to draft learning, alongside reference prediction accuracy. The resulting extension of DFlash and DSpark preserves shared backbone computation while improving how its predictions are used. These findings motivate draft objectives that account for the states and selection constraints encountered during candidate generation and verification.

\bibliography{iclr2027_conference}
\bibliographystyle{iclr2027_conference}

% \clearpage
\appendix
\section{Tree Construction and Greedy Verification}
\label{app:verification}

This appendix specifies the candidate policy used to extend the semi-autoregressive drafter in Section~\ref{sec:tree}. The construction retains alternative continuations from shared backbone outputs. The verification arguments state the conditions under which this deployed procedure preserves greedy target output.

\subsection{Candidate Expansion and Global Packing}
\label{app:construction}

The root $\rho$ represents the final committed token and has depth zero. For a non-root node $v$, let $\pa(v)$ be its parent, $t_v$ its token, and $d(v)$ its depth. Initialize the frontier as $\cF_0=\{\rho\}$, the candidate pool as $\cC=\varnothing$, and the root score as $s(\rho)=0$. At depth $d$, every $u\in\cF_{d-1}$ produces the $k$ highest-probability tokens under $q_{\theta,d}(\cdot\mid u,x)$. Each resulting child receives the joint log score
\begin{equation}
 s(v)=s(\pa(v))+
       \log q_{\theta,d(v)}(t_v\mid\pa(v),x).
 \label{eq:score}
\end{equation}
All these children are added to $\cC$. Among the children created at this depth, the $k$ highest-scoring nodes form $\cF_d$. Nodes outside the next frontier remain in the pool and remain eligible for verification. A path can therefore terminate before depth $B$ without losing its previously generated nodes.

The first depth contains at most $k$ candidates and each subsequent depth contains at most $k^2$. Consequently, $|\cC|\leq k+(B-1)k^2$. Global packing selects up to $N$ non-root nodes from this pool in decreasing score order, breaking score ties by increasing depth. The selected nodes and root define $\cT$. Their physical arrangement in the verification input preserves parent-before-child order. Distinct paths that end in the same token remain distinct nodes because the target conditionals depend on their complete prefixes.

\begin{proposition}[Prefix closure]
\label{prop:closure}
Suppose $\cC$ contains every non-root ancestor of each of its nodes. Any initial segment of the score ordering above, together with $\rho$, is prefix closed.
\end{proposition}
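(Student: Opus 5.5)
The plan is to show that in the total order used for packing (decreasing score $s$, ties broken by increasing depth), every non-root ancestor of a node precedes that node. Once this holds, the proposition follows directly. Let $S$ be an initial segment and take $v\in S$. Every non-root ancestor $w$ of $v$ lies in $\cC$ by the hypothesis, and by the ordering claim it precedes $v$. Since $S$ is an initial segment, $w\in S$, and the root is added explicitly. Hence $S\cup\{\rho\}$ contains the full ancestor path of each of its nodes, which is what prefix closure means. By induction along the path, it suffices to treat a single edge: for a non-root $v$ whose parent $u=\pa(v)$ is also non-root, we must show that $u$ strictly precedes $v$.

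The key step is score monotonicity along an edge, which follows from Equation~\ref{eq:score}. We have $s(v)=s(u)+\log q_{\theta,d(v)}(t_v\mid u,x)$, and the conditional probability lies in $(0,1]$ because it is a softmax output, so its logarithm is at most $0$. This gives $s(v)\le s(u)$. If the inequality is strict, $u$ comes earlier in decreasing score order. If equality holds, which happens when the probability rounds to $1$, the tie rule applies: $d(u)=d(v)-1<d(v)$, so $u$ is placed first. Either way $u$ precedes $v$. Chaining this over the ancestor path shows that every ancestor precedes $v$.

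The main obstacle is ensuring that the order is a well-defined strict total order, so that "initial segment" is unambiguous and the tie rule really separates $u$ from $v$. Distinct nodes can share both score and depth; the construction keeps such nodes distinct even when they carry the same token. For these ties I would fix an arbitrary secondary key, such as creation index, and check that the argument above never uses it. The argument only compares a node with its own ancestors, and those always differ in depth, so any refinement of (score descending, depth ascending) works. I would also note that softmax probabilities are strictly positive, so every score is finite and no $-\infty$ ties arise; the numerical tie handling deferred to Appendix~\ref{app:numerics} is assumed to implement exactly this comparison.

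Finally, I would point out that the hypothesis on $\cC$ is automatically satisfied by the construction: each child is created from a frontier parent, which was itself added to $\cC$ at the previous depth. The hypothesis is nonetheless stated explicitly so that the proposition also covers the replayed trees used in training.
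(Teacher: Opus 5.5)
Your proof is correct and is essentially the paper's argument: nonpositivity of the conditional log-probability gives $s(v)\le s(\pa(v))$, the depth tie-break settles equality, and recursion along the ancestor path places every ancestor before the node, so no initial segment can contain a node without its ancestors. Your extra remarks are sound, namely refining any remaining ties by an arbitrary key and noting that the hypothesis on $\cC$ holds by construction. However, the claim that the proposition also covers training replay is not quite right: the paper states that training packs with a plain floating-point top-$N$ that omits the depth preference, so at exact score ties that packed set need not be an initial segment of this ordering.
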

\begin{proof}
A conditional log probability is nonpositive, so Equation~\ref{eq:score} implies $s(v)\leq s(\pa(v))$. A strictly higher parent score places the parent before its child. Equal scores also place the parent first because its depth is smaller. Applying this argument recursively shows that every ancestor precedes the node. Thus, selecting an initial segment cannot retain a node without its ancestors.
\end{proof}

Prefix closure makes the global budget operationally useful: every selected node has a complete path available for target evaluation, and no extra ancestor positions need to be inserted after packing. The result depends on the ordering rule, not on the accuracy of draft probabilities. Appendix~\ref{app:numerics} describes finite precision differences between the training and serving selection operators.

\subsection{Attention Contexts and Verification}
\label{app:greedy}

Write the committed prefix as $x=x_{<t}x_t$, where $x_t$ is the anchor. The target retains the cache for $x_{<t}$ and processes the anchor together with the packed non-root nodes. The attention mask for node $u$ exposes $x_{<t}$ and the root-to-$u$ path, including $u$ itself. Siblings and unrelated branches are masked. A node at depth $d$ uses absolute position $t+d$, independently of its packed index. These choices give each node the causal context of its corresponding sequential continuation.

Verification begins at the root. The output logits at the current node predict the next token. If the target argmax occurs among its children, that token is accepted and verification continues at the matching child. Otherwise, the target argmax is emitted as a bonus token and the round ends. A leaf likewise supplies the next target token as the bonus. End-of-sequence and output-length constraints can terminate the process earlier.

\begin{proposition}[Greedy output preservation]
\label{prop:greedy}
Assume that tree-masked and sequential target evaluations agree on identical token paths and use the same argmax tie rule. For any prefix-closed candidate tree, the verifier emits the same tokens as autoregressive greedy decoding with $P$.
\end{proposition}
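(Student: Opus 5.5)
I plan to argue by induction on the number of tokens emitted within a single verification round, and then chain rounds together. Let $g_1,g_2,\ldots$ denote the tokens that autoregressive greedy decoding with $P$ produces after the committed prefix $x$, so $g_j=\arg\max P(\cdot\mid x,g_{<j})$ under the shared tie rule. Within a round, the inductive claim is: after $j$ acceptances, verification sits at a node $u_j$ of depth $j$ whose root-to-node token path is exactly $g_{1:j}$. The base case $j=0$ is the root $\rho$ with the empty path.

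For the inductive step, take $u_j$ with path $g_{1:j}$. By the attention construction of Appendix~\ref{app:greedy}, the node $u_j$ sees precisely $x_{<t}$, the anchor, and its ancestor path, and it uses absolute position $t+j$. Prefix closure (Proposition~\ref{prop:closure}) is what guarantees that all of these ancestors are actually present in $\cT$, so this masked context is the complete sequential context $(x,g_{1:j})$. The hypothesis that tree-masked and sequential evaluations agree on identical paths then says the output logits at $u_j$ equal those of $P(\cdot\mid x,g_{1:j})$. With a common tie rule, their argmax is therefore $g_{j+1}$. Two cases follow.

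First, if some child of $u_j$ carries token $g_{j+1}$, the verifier accepts it and moves to that child. The child's path is $g_{1:j+1}$, which preserves the invariant. This child is unique: distinct children of one parent carry distinct tokens, because top-$k$ selection never returns the same token twice.

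Second, if no such child exists, which includes the case where $u_j$ is a leaf, the verifier emits the argmax as the bonus token. That token is again $g_{j+1}$, and the round ends.

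In both cases every emitted token matches greedy decoding. Since the tree has finite depth at most $B$, each round terminates after emitting at least one token, namely the bonus.

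The round then commits the emitted tokens, and the new committed prefix is $x g_{1:m}$ for some $m\ge1$. A second induction over rounds gives equality of the full output streams. For that, I must check that the retained cache for the new prefix matches the sequential one, either by keeping the KV entries of the accepted path or by recomputing them. End-of-sequence and the length limit truncate both processes at the same index, because they act on identical token streams.

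The main obstacle is the step identifying the masked tree evaluation at $u_j$ with the sequential conditional. It relies on three facts holding together: prefix closure ensures no ancestor is missing; the position assignment $t+d$ is independent of the packed index; and the mask excludes siblings. Beyond these, the identification is exactly what the stated assumption provides, so I would make explicit that the proposition is conditional on that numerical-agreement assumption.
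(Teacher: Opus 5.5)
Your proof is correct and follows essentially the same route as the paper's: induction on accepted depth within a round, with the bonus token at the first missing child or leaf selected from target logits on the same prefix, and then the same argument repeated across rounds. The extra details you supply (uniqueness of the matching child, cache consistency between rounds, and simultaneous truncation at end-of-sequence or the length limit) are left implicit in the paper's sketch; one small remark is that any node reached by the traversal already has its ancestors in the tree, so prefix closure is just a hypothesis of the statement here and you do not need Proposition~\ref{prop:closure} for the correctness argument.
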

\begin{proof}
At the root, both procedures evaluate the same committed prefix and select the same next token. If the tree contains that token as a child, accepting it extends the prefix identically. The child attention context then agrees with sequential decoding on the extended prefix. Induction gives agreement at every accepted depth. At the first missing child, or at a leaf, the bonus token is selected from target logits on the same prefix and therefore also agrees. Applying the argument after every round establishes equality of the generated sequence.
\end{proof}

Target evaluation determines every committed token, while draft learning affects how many can be committed together. The guarantee therefore depends on per-path target evaluations and argmax tie rules, independently of the supervision used to train the drafter. In finite precision, different attention kernels can affect nearly tied logits, so the stated numerical assumption remains relevant. The evaluated procedure uses greedy decoding.

\subsection{Acceptance Length Convention}

Let $A_{\mathrm g}$ denote the number of accepted non-root draft nodes. An ordinary completed round commits $A_{\mathrm g}+1$ new tokens, including the target bonus. The implementation reports this committed length as acceptance length. A round stopped by an accepted end token or an output limit can omit the bonus. The anchor is already committed and is excluded. Training diagnostics at depth $d$ describe draft-path survival and should therefore be distinguished from a committed-length metric that includes the bonus.

\section{Training Objectives and Replay Details}
\label{app:training}

This appendix gives the full reductions behind Section~\ref{sec:objectives} and their integration with the inherited reference losses. It specifies how replay exposes the parallel drafter to proposed parent states and how the two additional objectives allocate supervision across its candidates.

\subsection{Reference Supervision and Replay}
\label{app:target_proxy}

A training sequence supplies reference tokens and cached target features. Aligned reference logits are obtained with the frozen target output projection and reused by both ordinary distillation and the tree objectives. The draft backbone input embeddings and vocabulary output projection are initialized from the target and remain frozen. The Markov embedding $e_\theta$ and projection $W_\theta$ in Equation~\ref{eq:draft} are separate trainable parameters. They are optimized jointly with the draft transformer layers; the target model remains frozen.

Index replayed anchors by $a$. The frozen reference distribution at depth $d$ is $\bar p_{a,d}=P(\cdot\mid x_a,y_{a,<d})$. For every proposed node, define
\begin{equation}
 m_a(\rho)=1,\qquad
 m_a(v)=m_a(\pa(v))\,
        \bar p_{a,d(v)}(t_v).
 \label{eq:mass}
\end{equation}
This recursively assigns a reference-derived mass to a complete proposed path. At the first depth, the teacher context is the committed reference anchor context. At later depths, the target distribution is shared across parents even when their paths differ from the reference. Exact target conditionals on those alternative paths would require additional target evaluations.

The replay proceeds as follows.
\begin{enumerate}
 \item Compute the parallel draft backbone outputs and baseline reference losses. Uniformly subsample up to $A_{\mathrm{tree}}$ valid anchors without replacement for tree supervision.
 \item Initialize one frontier root per selected anchor, with zero joint log score and unit reference mass.
 \item At each depth, evaluate the Markov head on the frontier parents. Record differentiable log probabilities of proposed children and coverage penalties for target top-$J$ tokens.
 \item Detach candidate identities and joint scores, add the proposed children to the pool, propagate detached reference masses, and select the next frontier.
 \item Apply the global node budget. Accumulate survival loss on valid packed nodes and coverage loss on valid expanded parents, then combine them with the baseline objective.
\end{enumerate}

The next update repeats the construction with updated parameters. The topology is therefore responsive to learning even though each gradient is evaluated with the current selection held fixed. Replay exposes the correction head to its own proposed parents within a block; the preceding anchor context and teacher distributions continue to come from reference data. The method uses this separation to obtain branch-dependent draft training without additional target tree evaluations.

All \method\ results use the first-order Markov correction in Equation~\ref{eq:draft}. Within a fixed anchor context, paths ending in the same token at the same depth share a correction. They remain separate candidates because target verification conditions on their full prefixes.

\subsection{Packed-Edge Survival Objective}

Let $e_{a,d}\in\{0,1\}$ mark a valid supervised depth and let $I_a(v)$ indicate that node $v$ is selected by global packing. Define $q_{a,v}=q_{\theta,d(v)}(t_v\mid\pa(v),x_a)$. With $\sg$ denoting stop-gradient, Equation~\ref{eq:survival} expands to
\begin{equation}
 \cL_{\mathrm{surv}}=
 -\frac{\displaystyle\sum_a\sum_{v\in\cC_a}
       e_{a,d(v)}I_a(v)\,\sg[m_a(v)]\,\log q_{a,v}}
       {\displaystyle\sum_a\sum_{v\in\cC_a}
       e_{a,d(v)}I_a(v)\,\sg[m_a(v)]+\epsilon}.
 \label{eq:survival_full}
\end{equation}
There is no additional explicit depth decay in this term. Its preference over paths arises from their reference masses and the packed-node mask. The denominator is detached along with those quantities. Conditional log probabilities remain differentiable, including their vocabulary normalization.

The path weights connect local prediction updates to the prefix dependence of acceptance. The packing mask further concentrates this supervision on predictions admitted by the deployed policy. Together, they adapt reference likelihood learning to the continuations that the inference extension makes available to the target.

\subsection{Frontier Coverage Objective}

For each anchor and depth, let $J_{a,d}$ contain the target top-$J$ tokens and write $H_{a,d}=\sum_{i\in J_{a,d}}\bar p_{a,d}(i)$. Let $h_{a,d}(u)$ be the parent penalty in Equation~\ref{eq:hinge}. The complete coverage objective is
\begin{equation}
 \cL_{\mathrm{cov}}=
 \frac{\displaystyle\sum_{a,d}\eta_{a,d}
             \sum_{u\in\cF_{a,d-1}}h_{a,d}(u)}
      {\displaystyle\sum_{a,d}\eta_{a,d}
             |\cF_{a,d-1}|H_{a,d}+\epsilon},
 \qquad
 \eta_{a,d}=e_{a,d}\exp\!\left(-\frac{d-1}{\gamma}\right).
 \label{eq:coverage}
\end{equation}
The decay scale $\gamma>0$ gives earlier candidate decisions greater weight. Every expanded parent contributes, including parents whose children are subsequently excluded by packing. This term uses neither a packing mask nor a parent path-mass multiplier. Its weighting is the product of target token probability and depth weight, with the same aggregate weight in the denominator.

The target distributions, target top-$J$ identities, and normalization are fixed. Gradients pass through the selected draft logits and the $k$th-largest boundary value. Discrete changes in the identity of that boundary are handled by reconstructing the replay at later updates.

\subsection{Baseline Losses and Distributed Normalization}
\label{app:baseline}

The baseline objective in Equation~\ref{eq:total} is
\begin{equation}
 \cL_{\mathrm{base}}=
 \lambda_{\mathrm{CE}}\cL_{\mathrm{CE}}
 +\lambda_1\cL_1
 +\lambda_{\mathrm{conf}}\cL_{\mathrm{conf}}.
 \label{eq:base}
\end{equation}
Cross-entropy supervises the reference continuation token. The distribution loss uses the unhalved $L_1$ distance between the target distribution and the draft distribution evaluated with the reference parent. The confidence head predicts the detached overlap score
\begin{equation}
 \alpha_{a,d}
 =1-\frac12\left\|\bar p_{a,d}
                  -q^{\mathrm{ref}}_{\theta,a,d}\right\|_1,
 \label{eq:confidence_target}
\end{equation}
using binary cross-entropy. Each baseline term is reduced with a validity mask and exponential depth decay, divided by the corresponding summed valid weight.

For ordinary single-proposal speculative rejection sampling, the overlap in Equation~\ref{eq:confidence_target} equals the one-step acceptance probability \citep{leviathan2023fast,chen2023speculative}. This explains its use in the inherited confidence objective. Greedy tree acceptance instead depends on whether the target argmax path is present, so this overlap is not substituted for accepted tree depth.

For each loss component, the implementation sums valid weights across distributed workers to form its denominator. Each local numerator is divided by that global denominator and multiplied by the number of workers to compensate for gradient averaging. The resulting gradient corresponds to the globally normalized objective, rather than an equal average of worker-specific losses with different valid weights. A stabilizer $\epsilon=10^{-6}$ protects the reductions, and components with no valid supervision are disabled. Reference distillation remains active when tree supervision is disabled.

\section{Analysis of the Supervision Objectives}
\label{app:analysis}

The following identities and gradient calculations explain the weighting and complementary roles of the two objectives. They support the training design by relating retained prefixes to candidate coverage and characterizing the signals applied to proposed parent states.

\subsection{Path Mass as a Supervision Weight}

We use the standard prefix-coverage identity underlying draft-tree selection \citep{ringel2026ddtree} to motivate the path weights in our training objective. The derivation below states the identity under exact target conditionals before relating it to the reference distributions used in replay.

Fix a committed prefix $x$ and a prefix-closed tree $\cT$. For a node $v$ at depth $d$ with token path $v_{1:d}$, its exact target path probability is
\begin{equation}
 m_P(v)=\prod_{j=1}^{d}P(v_j\mid x,v_{<j}).
 \label{eq:exact_mass}
\end{equation}
Let $Y$ be sampled autoregressively from $P$, and let $A$ be the length of its longest prefix contained in $\cT$, truncated at $B$.

\begin{proposition}[Depthwise coverage identity]
\label{prop:coverage}
The expected contained-prefix length is
\begin{equation}
 \E_P[A\mid x,\cT]
 =\sum_{d=1}^{B}S_d^P,
 \qquad
 S_d^P=\sum_{\substack{v\in\cT\\d(v)=d}}m_P(v).
 \label{eq:coverage_identity}
\end{equation}
\end{proposition}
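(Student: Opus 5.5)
The plan is to write $A$ as a sum of indicators and then use linearity of expectation. Since $A$ is the length of the longest prefix of $Y$ contained in $\cT$, truncated at $B$, and $A$ takes values in $\{0,1,\ldots,B\}$, we have
\begin{equation*}
 A=\sum_{d=1}^{B}\ind\{A\ge d\}.
\end{equation*}
Taking expectations conditional on $x$ and $\cT$ gives
\begin{equation*}
 \E_P[A\mid x,\cT]=\sum_{d=1}^{B}\Pr_P(A\ge d\mid x,\cT).
\end{equation*}
It therefore suffices to show $\Pr_P(A\ge d\mid x,\cT)=S_d^P$ for each $d\le B$.

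The key step is to identify the event $\{A\ge d\}$ with the event that the length-$d$ prefix $Y_{1:d}$ equals the token path of some node $v\in\cT$ with $d(v)=d$.

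\emph{Forward direction.} If $A\ge d$, some contained prefix of length at least $d$ exists. Its initial segment of length $d$ is then the path of a depth-$d$ node in $\cT$.

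\emph{Reverse direction.} If $Y_{1:d}$ matches a depth-$d$ node $v$, then by prefix closure (Proposition~\ref{prop:closure}, together with the fact that $\cT$ is assumed prefix closed) every shorter prefix $Y_{1:j}$ also corresponds to an ancestor of $v$ in $\cT$. Hence the longest contained prefix has length at least $d$.

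This is the one place where the structure of $\cT$ is used, and it is the step to state carefully. Without closure, "longest prefix contained" would be ill-behaved, although for the event $\{A\ge d\}$ we only need the existence of the depth-$d$ node. The more delicate point is \emph{disjointness}. Distinct depth-$d$ nodes carry distinct token paths: children of a common parent have distinct tokens because they come from a top-$k$ selection, and by induction paths through different parents differ at an earlier position. (The remark in Appendix~\ref{app:construction} that equal-ending paths remain distinct nodes does not create duplicates, since their full paths differ.) Consequently the events $\{Y_{1:d}=v_{1:d}\}$ for $d(v)=d$ are pairwise disjoint, and
\begin{equation*}
 \Pr_P(A\ge d\mid x,\cT)=\sum_{v\in\cT,\ d(v)=d}\Pr_P(Y_{1:d}=v_{1:d}\mid x).
\end{equation*}

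Finally, because $Y$ is sampled autoregressively from $P$, the chain rule gives
\begin{equation*}
 \Pr_P(Y_{1:d}=v_{1:d}\mid x)=\prod_{j=1}^{d}P(v_j\mid x,v_{<j})=m_P(v),
\end{equation*}
which is Equation~\ref{eq:exact_mass}. Summing over depth-$d$ nodes yields $S_d^P$, and summing over $d$ gives Equation~\ref{eq:coverage_identity}.

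I expect the main obstacle to be the distinctness argument for depth-$d$ paths. It is not a probabilistic difficulty; it requires making explicit that the construction never places two nodes with identical full token paths in $\cT$, which follows from the top-$k$ expansion producing distinct tokens per parent.
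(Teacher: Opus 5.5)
Your proposal is correct and follows the paper's proof: apply the tail-sum identity to $A\in\{0,\ldots,B\}$, identify $\{A\ge d\}$ with $Y_{1:d}$ matching one of the disjoint depth-$d$ node paths, and use the chain rule to get $m_P(v)$. The only difference is that you justify the disjointness of depth-$d$ paths (distinct sibling tokens from top-$k$ expansion, then induction over parents), whereas the paper simply asserts that distinct nodes represent disjoint token prefixes.
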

\begin{proof}
For a fixed depth $d$, distinct nodes represent disjoint token prefixes. The event $A\geq d$ occurs precisely when $Y_{1:d}$ matches one of those prefixes. Its probability is therefore the sum of their path probabilities, $S_d^P$. Applying the tail-sum identity to $A\in\{0,\ldots,B\}$ gives the result.
\end{proof}

The identity motivates weighting deeper predictions through their complete prefixes. Survival uses this principle as detached likelihood weights during replay. Equation~\ref{eq:coverage_identity} characterizes a target-sampled path in a fixed candidate tree; the implemented objective is a supervised surrogate, and greedy acceptance is measured with the deployed verifier.

Reference-derived masses have an exact coverage interpretation under the auxiliary product distribution
\begin{equation}
 \bar P(Y_{1:B}\mid x)=\prod_{d=1}^{B}\bar p_d(Y_d).
 \label{eq:reference_product}
\end{equation}
Thus, summing $m(v)$ over depth-$d$ nodes measures coverage under $\bar P$. It approximates coverage under the autoregressive target to the extent that reference distributions remain representative on alternative paths. For either distribution, coverage is nonincreasing with depth in a prefix-closed tree because extending a contained path imposes an additional constraint.

\subsection{Survival Gradients on Retained Edges}
\label{app:gradient}

Consider a fixed replay and one valid parent $u$. Let $K_u$ be the tokens of its packed children, let $a_i$ denote their detached reference path masses, and let $M_u=\sum_{i\in K_u}a_i$. Before global normalization, the parent contributes
\begin{equation}
 L_u=-\sum_{i\in K_u}a_i\log q_i,
 \qquad
 \frac{\partial L_u}{\partial z_j}
 =M_uq_j-\ind\{j\in K_u\}a_j.
 \label{eq:survival_gradient}
\end{equation}
For $M_u>0$, this is $M_u$ times cross-entropy against a distribution assigning $a_i/M_u$ to retained children and zero to other tokens. The parent path mass cancels from this local normalized target but scales the contribution of the parent relative to other states. Shared draft parameters couple these local updates across parents and depths.

Positive likelihood targets are assigned to retained edges. Omitted tokens receive only normalization gradients from this term, which explains the need for coverage and reference distillation. The derivative is conditional on the current replay; subsequent updates reconstruct parent states and candidate choices. Acceptance and throughput evaluation measure the resulting deployment behavior.

\subsection{Coverage Gradients and Saturation}
\label{app:hinge}

For one supervised token, write $r_i=\delta-z_i+b$ and $\phi(r)=\min\{c,\max(0,r)\}$. Away from the two clipping boundaries, its derivative with respect to $r$ is one when $0<r<c$ and zero otherwise. On a region with fixed top-$k$ identities,
\begin{equation}
 \nabla_z\phi(r_i)=
 \begin{cases}
  \nabla_z b-\nabla_z z_i,
       & \delta-c<z_i-b<\delta,\\
  0,   & \text{otherwise}.
 \end{cases}
 \label{eq:hinge_gradient}
\end{equation}
Consequently, a target-important token near the selection boundary is promoted relative to that boundary. A token sufficiently above the margin needs no correction. A token far below the boundary contributes a capped penalty without a direct hinge gradient, preventing large logit gaps from dominating this auxiliary objective. Such tokens remain supervised by the baseline losses and can enter the active interval as shared parameters change.

The boundary value is differentiable. If the supervised token itself defines the $k$th logit, the derivatives of $z_i$ and $b$ cancel and its penalty equals $\min\{c,\delta\}$. Hence the loss need not vanish when the target top-$J$ set is already contained in the draft top-$k$ set. It is a local margin surrogate for useful candidate coverage, not an exact set-membership loss.

Every scalar penalty lies in $[0,c]$, and Equation~\ref{eq:coverage} normalizes by the same total target and depth weights that multiply those penalties. It follows that $0\leq\cL_{\mathrm{cov}}\leq c$. This bound gives the auxiliary coefficient $\lambda_c$ a consistent interpretation across changes in the number of replayed anchors or expanded parents. The objective places no pairwise ordering constraints on target top-$J$ tokens, while their target probabilities determine the relative importance of active corrections.

\subsection{Diagnostics on the Packed Tree}
\label{app:diagnostics}

We record two quantities using the packed-node mask. Reference mass coverage at depth $d$ is
\begin{equation}
 \bar S_d=\sum_{\substack{v\in\cT\\d(v)=d}}m(v).
 \label{eq:mass_diagnostic}
\end{equation}
It measures how much probability the selected tree covers under Equation~\ref{eq:reference_product}. Reference argmax survival uses $\bar g_d=\arg\max_i\bar p_d(i)$ and records
\begin{equation}
 \bar G_d=\ind\{\bar g_{1:d}\text{ is a path in }\cT\}.
 \label{eq:argmax_diagnostic}
\end{equation}
The sum of these indicators is the contained depth of the reference argmax sequence.

For comparison, actual greedy accepted depth replaces $\bar g_d$ with the recursively conditioned token $g_d=\arg\max_i P(i\mid x,g_{<d})$. These sequences coincide when their conditioning contexts agree. The diagnostics therefore characterize the selected training tree under its available supervision, while deployment evaluation measures actual target acceptance. Batch averages use valid anchors at each depth, so their denominators can differ across depths.

\section{Implementation and Computational Cost}
\label{app:implementation}

\subsection{Configuration and Capacity}

The reference configuration uses $B=7$, $k=4$, and $N=32$ non-root nodes. A full seven-depth expansion produces $4+6\cdot16=100$ candidates before packing. A full target input contains the root anchor and 32 selected nodes, for 33 positions. The maximum ordinary committed length is eight tokens, comprising seven accepted draft tokens and one target bonus. This budget convention counts the anchor separately.

The correction rank is $r=256$. Coverage uses $J=4$, margin $\delta=1$, cap $c=4$, and depth-decay scale $\gamma=4$. Baseline coefficients are $\lambda_{\mathrm{CE}}=0.1$, $\lambda_1=0.9$, and $\lambda_{\mathrm{conf}}=1$; the two tree coefficients are $\lambda_s=\lambda_c=0.1$. Up to 128 tree anchors are selected from 512 backbone anchor blocks per sequence. Training records its tree hyperparameters, and inference receives the corresponding branching and packing arguments explicitly. Deployment with the intended training policy requires matching those values.

The complete systems for Qwen3-4B and Qwen3-8B use draft attention head dimension 192, compared with 128 in the inherited configuration. The motivation is to provide a larger attention representation for the shared contextual prediction that must support multiple parent-conditioned distributions during tree replay. At fixed attention head counts, widening each head increases the dimensions of the query, key, and value projections and the input to the output projection. The residual hidden size, layer count, Markov rank $r=256$, and tree budget are unchanged. The capacity increase is therefore localized to the draft attention computation and is used during both training and inference.

Attention normalization, rotary embeddings, and scaling follow the selected head dimension. The affected projection sizes and draft KV storage scale by $192/128=1.5$; this factor does not describe the parameter count or memory footprint of the entire model. DSpark + Tree and each single-loss ablation retain dimension 192. The 128-dimensional variant retains both objectives and the candidate policy, allowing its throughput comparison to assess the capacity choice separately from the supervision.

\subsection{Replay Cost and Activation Memory}

The backbone outputs are shared across all parents at an anchor and depth. Following one backbone evaluation, replay computes at most $k$ Markov corrections per anchor and depth. Their rank-$r$ vocabulary projection costs $O(kr|\mathcal V|)$; it does not repeat transformer computation for each branch. Tree metadata requires $O(A_{\mathrm{tree}}Bk^2)$ entries per sequence, while vocabulary logits are the dominant intermediate tensors.

Anchor subsampling limits the number of tree states used in an update. Checkpointing reconstructs each depth of parent-conditioned logits during backward, reducing the need to retain vocabulary-sized activations. Target distributions are processed by depth. The detached pool stores token identities, parent pointers, path scores, reference masses, and packing masks.

These choices preserve the main source of parallel drafting efficiency: a shared backbone supplies the block, and a lightweight head supplies branch dependence. Tree supervision still adds draft computation and backward work. Its specific target-side advantage is that it reuses the teacher information already needed by reference distillation.

\subsection{Batched Construction and Verification}

The first frontier has one parent per request and later frontiers have up to $k$. Compatible frontier shapes allow active requests to share each depth-wise correction and selection call. There remain $B$ sequential head stages, but the implementation avoids a separate host expansion loop for every request.

Local top-$k$ identities can be selected from raw logits. Joint path comparison additionally requires the row normalization:
\begin{equation}
 \log q_i=z_i-\operatorname{logsumexp}_{j\in\mathcal V}z_j.
 \label{eq:normalizer}
\end{equation}
Computing this value for selected tokens avoids materializing a normalized vocabulary tensor solely to retain a few entries. The normalizer itself is essential because candidates from different parents compete by joint probability.

Parent pointers are converted on device into ancestor masks and path retrieval indices. The target verifies the packed trees in a batch, and accepted paths are retrieved per request. Cache management retains valid states associated with committed paths and invalidates rejected positions using per-request masks. Compaction is deferred until needed, allowing requests to commit different numbers of tokens without imposing a common accepted depth.

\subsection{Numerical Selection Conventions}
\label{app:numerics}

Training and serving use the same expansion, frontier pruning, and node-budget rules. The serving reference additionally prefers smaller depth on tied joint scores, as specified in Appendix~\ref{app:construction}. Training uses floating point top-$N$ without this depth preference. The packed sets therefore coincide for matching scores without boundary ties; exact agreement at ties requires a common tie rule. Proposition~\ref{prop:closure} assumes the stated depth-aware ordering.

The optimized serving path performs local selection, subtraction, and exponentiation in the input precision, with float32 accumulation for the normalizer. Scores can consequently differ from full float32 log-softmax near a selection boundary. These operator-level differences concern the selected candidates. Greedy output preservation follows from the target verification conditions in Proposition~\ref{prop:greedy}.

\section{Evaluation Protocol and Supplementary Results}
\label{app:experiments}

\subsection{Data and Training Protocol}
\label{app:experimental-protocol}

\paragraph{Training mixture.}
All main draft comparisons use a common mixture of 50,000 prompts. The mathematics sources are GSM8K \citep{cobbe2021gsm8k}, MATH \citep{hendrycks2021math}, and AIME 2024. Code prompts come from the sanitized MBPP training split \citep{austin2021mbpp}, the Python subset of Magicoder OSS-Instruct \citep{wei2024magicoder}, APPS \citep{hendrycks2021apps}, and CodeContests \citep{li2022alphacode}. Dialogue and instruction prompts come from OASST1 \citep{kopf2023openassistant}, Dolly-15K \citep{databricks2023dolly}, and WildChat \citep{zhao2024wildchat}. Writing prompts come from the Writer-R1 training collection \citep{zhao2026writerr1}. Writer-R1 and WritingBench are distinct sources; the latter supplies writing evaluation prompts.

The mixture retains the first user turn of each dialogue. For instruction records, a nonempty input field is appended to the instruction. Prompts longer than 2,048 tokens are removed before sampling. A pseudorandom generator with seed 42 samples each source without replacement, shuffles the combined pool of 50,123 records, and retains the first 50,000. Table~\ref{tab:training-mixture} reports the resulting counts after filtering and truncation. These differ slightly from the sampling quotas. The final composition is 29.92\% mathematics, 40.15\% code, 19.94\% dialogue and instructions, and 9.99\% writing. The drafter learns from cached target continuations and features on these prompts, with a maximum training sequence length of 4,096 tokens.

\begin{table}[t]
 \centering
 \caption{Composition of the 50,000-prompt training mixture after length filtering, sampling, shuffling, and truncation. Counts refer to records, not unique prompt strings.}
 \label{tab:training-mixture}
 \small
 \begin{tabular*}{\linewidth}{@{\extracolsep{\fill}}llrr@{}}
 \toprule
 Category & Source & Records & Share (\%) \\
 \midrule
 Mathematics & GSM8K & 7,450 & 14.90 \\
 & MATH & 7,480 & 14.96 \\
 & AIME 2024 & 30 & 0.06 \\
 Code & MBPP & 120 & 0.24 \\
 & Magicoder OSS-Instruct, Python & 9,481 & 18.96 \\
 & APPS & 3,989 & 7.98 \\
 & CodeContests & 6,485 & 12.97 \\
 Dialogue and instructions & OASST1 & 3,488 & 6.98 \\
 & Dolly-15K & 2,990 & 5.98 \\
 & WildChat & 3,493 & 6.99 \\
 Writing & Writer-R1 & 4,994 & 9.99 \\
 \midrule
 Total & & 50,000 & 100.00 \\
 \bottomrule
 \end{tabular*}
\end{table}

\paragraph{Evaluation workloads.}
The general suite spans mathematics with GSM8K, MATH-500 \citep{hendrycks2021math,lightman2023verify}, and AIME 2025; code with HumanEval \citep{chen2021humaneval}, MBPP, and LiveCodeBench \citep{jain2025livecodebench}; and dialogue or instruction following with MT-Bench \citep{zheng2023mtbench}, Alpaca \citep{taori2023alpaca}, and ArenaHard-v2 \citep{lmarena2025arenahard}. The evaluation year for AIME is 2025, distinct from the 2024 problems used in training. These datasets supply prompts for measuring decoding efficiency; the reported values are throughput and acceptance, rather than task accuracy or writing quality scores.

The general suite uses 500 GSM8K prompts, 500 MATH-500 problems, 30 AIME 2025 problems, 164 HumanEval prompts, 256 MBPP prompts, 500 LiveCodeBench prompts, 80 MT-Bench prompts, 500 Alpaca prompts, and 500 ArenaHard-v2 prompts. Each request uses the first user turn of its record. Where a local prompt pool exceeds the requested count, the evaluator shuffles it with a fixed pseudorandom seed before selecting the subset. All compared methods use the same selected prompts.

WritingBench \citep{wu2025writingbench} covers Academic and Engineering, Advertising and Marketing, Education, Finance and Business, Literature and Arts, and Politics and Law. We evaluate all 1,000 prompts, with 167, 128, 111, 210, 183, and 201 in the respective domains. During preparation, writing training prompts were filtered against evaluation prompts using $n$-gram overlap with $n=3$. A training prompt was removed if its shared trigrams accounted for at least 10\% of the trigrams in any compared evaluation prompt. Normalization uses the evaluation prompt, making the overlap criterion directional. An additional exact-text comparison of the final training file against all WritingBench prompts finds no matches. Domain means weight these six groups equally, independently of their sizes.

\paragraph{Optimization and checkpoint selection.}
For both targets, all drafters use ten training epochs with AdamW, a learning rate of $6\times10^{-4}$, 4\% warmup, bfloat16 computation, and global batch size 64. Drafter depth follows Section~\ref{sec:setup}: one layer for EAGLE-3 and five for DFlash, DSpark, and \method. Results use the final checkpoint. \method\ retains the reference losses and adds the two auxiliary terms with the coefficients specified in Appendix~\ref{app:implementation}. These settings are also retained in DSpark + Tree and the single-loss ablations; the compared tree policy and serving implementation remain fixed. Evaluation applies no input-length truncation and limits generated output to 2,048 tokens; the 2,048-token prompt filter is used only when constructing the training mixture.

\paragraph{Comparison scope.}
The main comparisons evaluate the extension of parallel drafting under the common protocol in Section~\ref{sec:setup}. EAGLE-3 supplies an autoregressive-drafter reference in its evaluated chain configuration. DSpark + Tree is a reference-trained control with the same attention head dimension of 192 and the same inference policy as \method; it disables both tree losses. Single-loss ablations set only the corresponding coefficient to zero. The capacity ablation uses head dimension 128 with both objectives and the candidate policy retained.

\subsection{Metric Definitions and Aggregation}
\label{app:experimental-metrics}

\paragraph{Acceptance.}
For each dataset, acceptance length is the total number of newly committed tokens divided by the number of verification rounds. The anchor is excluded. The target bonus is included when emitted, while termination by an accepted end token or an output limit can shorten the final round. The eight-position block convention therefore corresponds to seven draft depths and at most eight newly committed tokens in an ordinary round. A tree budget of 32 counts non-root candidates; its target input has up to 33 positions including the anchor.

The position-wise metric at draft position $j$ divides the number of rounds that accept at least $j+1$ draft tokens by the number that propose that position. It is cumulative over accepted depth. The left panel of Figure~\ref{fig:acceptance} averages recorded rates at $C=12$ across the nine general benchmarks; shading shows their range across tasks. These rates and aggregate acceptance lengths come from separate evaluation records. The right panel therefore uses the directly recorded lengths in Table~\ref{tab:acceptance-length}, including their termination behavior. Figure~\ref{fig:summary} uses the same means of 4.23 for DSpark and 5.25 for \method, together with the suite-level DSpark + Tree aggregate of 4.93. Benchmark-level lengths and position-wise rates are not reported for this control. Acceptance length is measured in tokens per verification round, separately from throughput in tokens per second.

\paragraph{Serving throughput.}
Single-request throughput divides total output tokens by accumulated decoding time. The concurrency experiment uses a closed loop that admits a new request whenever a slot becomes free. System throughput divides all generated tokens by the elapsed workload time. For multiple cards, output counts are summed across workers and elapsed time is the maximum worker duration. At total concurrency 36, three workers each maintain up to 12 active requests. Each worker hosts its own model replica; the comparison measures aggregate serving throughput across these replicas.

Per-request throughput is computed from each request after its first output token: subsequent output tokens are divided by the interval from first-token emission to completion, and the resulting rates are averaged across requests. It excludes time to first token and need not equal system throughput divided by nominal concurrency. Prefill, the filling and draining of active slots, and variable completion times affect the two measurements differently.

\paragraph{Averages and relative gains.}
Within each target and workload group, Mean is the unweighted arithmetic mean of the benchmark metrics. In throughput tables, the Mean entry in a $\Delta$ row averages the relative improvement on each benchmark. In Tables~\ref{tab:acceptance-length} and~\ref{tab:writing-acceptance}, the corresponding entry instead compares the two mean acceptance lengths. Figure~\ref{fig:summary} displays arithmetic mean absolute metrics. Gains between tree configurations compare these means: this convention applies to the 4.2\% and 5.0\% throughput gains over DSpark + Tree and the 2.0\% system and 2.1\% per-request gains from increasing head dimension. The approximate 6.4\% acceptance gain uses the unrounded \method\ mean and the available DSpark + Tree aggregate of 4.93. Other aggregates are calculated from benchmark-level observations before rounding for presentation.

Throughput values are averaged over three repeated measurements before benchmark aggregation. We report these means without error bars or significance tests; uncertainty across repetitions is not quantified.

\subsection{Acceptance on WritingBench}
\label{app:writing-results}

\begin{table}[t]
\caption{Acceptance length on the six WritingBench domains, supplementing Table~\ref{tab:acceptance-length}. The target bonus is included when emitted. Mean weights domains equally; its $\Delta$ compares the two means for \method\ and DSpark. Other $\Delta$ entries compare individual domains. Bold marks column maxima.}
\label{tab:writing-acceptance}
\centering
\footnotesize
\setlength{\tabcolsep}{2pt}
\begin{tabular*}{\textwidth}{@{\extracolsep{\fill}}lccccccc@{}}
\toprule
\multicolumn{8}{@{}l}{\emph{Writing}} \\
\midrule
 & \hdr{Academic}{\& Engineering} & \hdr{Advertising}{\& Marketing} & Education & \hdr{Finance}{\& Business} & \hdr{Literature}{\& Arts} & \hdr{Politics}{\& Law} & Mean \\
\midrule
\multicolumn{8}{@{}l}{\emph{Qwen3-4B}} \\
EAGLE-3 & 2.09 & 1.98 & 2.06 & 2.19 & 1.94 & 2.09 & 2.06 \\
DFlash & 2.51 & 2.31 & 2.46 & 2.62 & 2.22 & 2.47 & 2.43 \\
\dspark{} & 2.79 & 2.51 & 2.72 & 2.90 & 2.41 & 2.75 & 2.68 \\
\ours{} & \textbf{3.68} & \textbf{3.27} & \textbf{3.58} & \textbf{3.84} & \textbf{3.16} & \textbf{3.60} & \textbf{3.52} \\
$\Delta$ (\%) & +31.9 & +30.3 & +31.6 & +32.4 & +31.1 & +30.9 & +31.4 \\
\midrule
\multicolumn{8}{@{}l}{\emph{Qwen3-8B}} \\
EAGLE-3 & 2.39 & 2.24 & 2.39 & 2.52 & 2.04 & 2.34 & 2.32 \\
DFlash & 2.41 & 2.26 & 2.40 & 2.54 & 2.10 & 2.37 & 2.35 \\
\dspark{} & 2.64 & 2.43 & 2.61 & 2.77 & 2.27 & 2.57 & 2.55 \\
\ours{} & \textbf{3.49} & \textbf{3.18} & \textbf{3.46} & \textbf{3.66} & \textbf{2.94} & \textbf{3.39} & \textbf{3.35} \\
$\Delta$ (\%) & +32.2 & +30.9 & +32.6 & +32.1 & +29.5 & +31.9 & +31.6 \\
\bottomrule
\end{tabular*}
\end{table}

Table~\ref{tab:writing-acceptance} supplements the general-task results in Table~\ref{tab:acceptance-length} with all six WritingBench domains. \method\ achieves the highest acceptance length in every domain for both targets. Relative gains over DSpark range from 30.3\% to 32.4\% for Qwen3-4B and from 29.5\% to 32.6\% for Qwen3-8B, showing that the higher mean acceptance extends across the writing suite. Together with the writing throughput results in Table~\ref{tab:throughput-c1}, these measurements establish that longer accepted paths translate into faster generation on these workloads. They evaluate the complete system; the contribution of tree supervision at fixed capacity and inference policy is examined separately in Section~\ref{sec:throughput-results}.

\subsection{Per-Request Throughput and Ablations}
\label{app:request-results}

\begin{table}[t]
\caption{Per-request decoding throughput on Qwen3-4B (tokens/s/request), supplementing Table~\ref{tab:concurrency}. Hardware and concurrency are identical. $\Delta$ gives relative gains over DSpark; its Mean entry averages per-benchmark gains.}
\label{tab:request-throughput}
\centering
\footnotesize
\setlength{\tabcolsep}{2pt}
\begin{tabular*}{\textwidth}{@{\extracolsep{\fill}}lcccccccccc@{}}
\toprule
\multicolumn{11}{@{}l}{\emph{Per-request throughput (tokens/s/req)}} \\
\midrule
 & GSM8K & \hdr{MATH}{500} & \hdr{AIME}{25} & \hdr{Human}{Eval} & MBPP & \hdr{LiveCode}{Bench} & \hdr{MT-}{Bench} & Alpaca & \hdr{Arena}{Hard-v2} & Mean \\
\midrule
\multicolumn{11}{@{}l}{\emph{$C=6$}} \\
Vanilla & 11.69 & \phantom{0}9.89 & \phantom{0}8.73 & 10.93 & 10.92 & \phantom{0}9.05 & 10.24 & 10.38 & \phantom{0}9.30 & 10.13 \\
\dspark{} & 40.44 & 38.58 & 34.25 & 31.59 & 29.51 & 32.56 & 18.61 & 18.51 & 20.46 & 29.39 \\
\ours{} & \textbf{44.09} & \textbf{44.42} & \textbf{38.24} & \textbf{36.13} & \textbf{33.84} & \textbf{36.36} & \textbf{23.10} & \textbf{22.05} & \textbf{24.46} & \textbf{33.63} \\
$\Delta$ (\%) & +9.0 & +15.1 & +11.6 & +14.4 & +14.7 & +11.7 & +24.1 & +19.1 & +19.6 & +15.5 \\
\midrule
\multicolumn{11}{@{}l}{\emph{$C=12$}} \\
Vanilla & 10.80 & \phantom{0}9.20 & \phantom{0}8.35 & \phantom{0}9.81 & \phantom{0}9.32 & \phantom{0}8.35 & \phantom{0}8.36 & \phantom{0}8.93 & \phantom{0}8.77 & \phantom{0}9.10 \\
\dspark{} & 29.52 & 33.21 & 29.94 & 24.81 & 22.60 & 25.15 & 15.85 & 15.42 & 17.38 & 23.76 \\
\ours{} & \textbf{32.99} & \textbf{38.44} & \textbf{36.76} & \textbf{29.50} & \textbf{26.24} & \textbf{28.52} & \textbf{19.73} & \textbf{18.66} & \textbf{19.87} & \textbf{27.86} \\
$\Delta$ (\%) & +11.8 & +15.7 & +22.8 & +18.9 & +16.1 & +13.4 & +24.5 & +21.0 & +14.3 & +17.6 \\
\midrule
\multicolumn{11}{@{}l}{\emph{$C=36$}} \\
Vanilla & 10.47 & \phantom{0}8.86 & \phantom{0}5.17 & \phantom{0}9.70 & \phantom{0}9.06 & \phantom{0}7.74 & \phantom{0}7.36 & \phantom{0}8.64 & \phantom{0}8.52 & \phantom{0}8.39 \\
\dspark{} & 31.46 & 31.97 & 19.21 & 24.17 & 21.97 & 24.44 & 13.48 & 14.80 & 17.20 & 22.08 \\
\ours{} & \textbf{33.61} & \textbf{35.82} & \textbf{21.10} & \textbf{27.22} & \textbf{24.39} & \textbf{26.51} & \textbf{15.58} & \textbf{17.60} & \textbf{19.09} & \textbf{24.55} \\
$\Delta$ (\%) & +6.8 & +12.0 & +9.8 & +12.6 & +11.0 & +8.5 & +15.6 & +18.9 & +11.0 & +11.8 \\
\bottomrule
\end{tabular*}
\end{table}

\begin{table}[t]
\caption{Per-request throughput for the ablations in Table~\ref{tab:ablation-c12} (tokens/s/request). Each variant retains tree inference. $\Delta$ compares each variant with DSpark; its Mean entry averages benchmark gains. Bold marks column maxima.}
\label{tab:request-ablation}
\centering
\footnotesize
\setlength{\tabcolsep}{2pt}
\begin{tabular*}{\textwidth}{@{\extracolsep{\fill}}lcccccccccc@{}}
\toprule
\multicolumn{11}{@{}l}{\emph{Per-request throughput (tokens/s/req)}} \\
\midrule
 & GSM8K & \hdr{MATH}{500} & \hdr{AIME}{25} & \hdr{Human}{Eval} & MBPP & \hdr{LiveCode}{Bench} & \hdr{MT-}{Bench} & Alpaca & \hdr{Arena}{Hard-v2} & Mean \\
\midrule
\dspark{} & 29.52 & 33.21 & 29.94 & 24.81 & 22.60 & 25.15 & 15.85 & 15.42 & 17.38 & 23.76 \\
\midrule
\ours{} & \textbf{32.99} & \textbf{38.44} & \textbf{36.76} & \textbf{29.50} & \textbf{26.24} & \textbf{28.52} & \textbf{19.73} & \textbf{18.66} & \textbf{19.87} & \textbf{27.86} \\
$\Delta$ (\%) & +11.8 & +15.7 & +22.8 & +18.9 & +16.1 & +13.4 & +24.5 & +21.0 & +14.3 & +17.6 \\
\midrule
w/o $\cL_{\mathrm{surv}}$ & 31.54 & 36.88 & 35.30 & 28.61 & 25.20 & 26.72 & 18.43 & 18.11 & 19.52 & 26.70 \\
$\Delta$ (\%) & \phantom{0}+6.8 & +11.1 & +17.9 & +15.3 & +11.5 & \phantom{0}+6.2 & +16.3 & +17.4 & +12.3 & +12.8 \\
\midrule
w/o $\cL_{\mathrm{cov}}$ & 32.42 & 38.24 & 34.45 & 28.69 & 25.88 & 28.10 & 19.15 & 18.26 & 19.81 & 27.22 \\
$\Delta$ (\%) & \phantom{0}+9.8 & +15.1 & +15.1 & +15.6 & +14.5 & +11.7 & +20.8 & +18.4 & +14.0 & +15.0 \\
\midrule
Head dim.\ 128 & 32.02 & 37.65 & 35.71 & 28.74 & 26.04 & 28.15 & 19.37 & 18.08 & 19.82 & 27.29 \\
$\Delta$ (\%) & \phantom{0}+8.5 & +13.4 & +19.3 & +15.8 & +15.2 & +11.9 & +22.2 & +17.3 & +14.0 & +15.3 \\
\bottomrule
\end{tabular*}
\end{table}

Table~\ref{tab:request-throughput} complements the system measurements in Table~\ref{tab:concurrency}. Mean per-request throughput increases from 29.39 to 33.63 tokens/s at $C=6$, from 23.76 to 27.86 at $C=12$, and from 22.08 to 24.55 at $C=36$. Every benchmark shows a higher average request decoding rate in each setting. This metric summarizes per-request throughput rather than tail latency.

The per-request ablations in Table~\ref{tab:request-ablation} support the component analysis in the main text. The full configuration improves throughput over DSpark by an average of 17.6\%. Removing survival reduces this gain to 12.8\%; removing coverage reduces it to 15.0\%; retaining head dimension 128 yields 15.3\%. At fixed objectives and tree policy, widening the head from 128 to 192 increases mean per-request throughput from 27.29 to 27.86 tokens/s, a 2.1\% gain computed before rounding. The corresponding mean system throughput increases by 2.0\%. The complete configuration has the highest per-request throughput on each benchmark in this ablation. Together with Table~\ref{tab:ablation-c12}, these results support combining tree supervision with the larger attention representation.

\subsection{Training Alignment across Benchmarks}
\label{app:detailed-figures}

Figure~\ref{fig:throughput-detail} expands the aggregate throughput panels of Figure~\ref{fig:summary}. Figures~\ref{fig:throughput-system-wide} and~\ref{fig:throughput-request-wide} provide larger views of the same system and per-request measurements. These figures share the underlying observations and are not additional experimental runs.

At $C=12$, mean system throughput is 254.68 tokens/s for DSpark, 283.54 for DSpark + Tree, and 295.34 for \method. At $C=36$, the corresponding means are 574.45, 615.68, and 646.44. Mean per-request rates follow the same ordering: 23.76, 26.66, and 27.86 at $C=12$, and 22.08, 23.84, and 24.55 at $C=36$. The acceptance panel in Figure~\ref{fig:summary} complements these throughput measurements: the two tree configurations achieve mean lengths of 4.93 and 5.25. Their difference of approximately 0.32 tokens per round indicates that tree supervision improves the use of a fixed verification budget. The comparison with DSpark measures the overall system improvement; the comparison between the two tree configurations holds capacity and inference policy fixed and isolates the change in training objectives.

The improvement from aligned supervision is positive in aggregate and varies by task. The reference-trained control has higher system throughput on AIME 2025, MBPP, and ArenaHard-v2 at $C=12$, and on GSM8K, MT-Bench, and ArenaHard-v2 at $C=36$. The complete system exceeds chain DSpark throughout both groups. Together, these results distinguish the benefit of the full training and inference extension from the additional gain due to supervision at fixed capacity and candidate policy.

\begin{figure}[t]
 \centering
 \includegraphics[width=\linewidth]{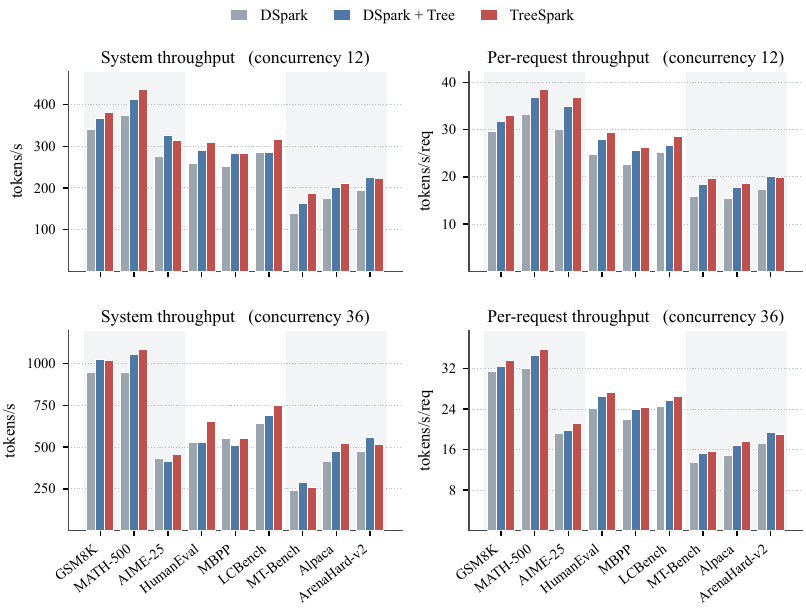}
 \caption{\textbf{Detailed throughput on Qwen3-4B.} System and per-request measurements at total concurrency 12 and 36 compare DSpark, DSpark + Tree, and \method. The two tree configurations use attention head dimension 192.}
 \label{fig:throughput-detail}
\end{figure}

\begin{figure}[t]
 \centering
 \includegraphics[width=\linewidth]{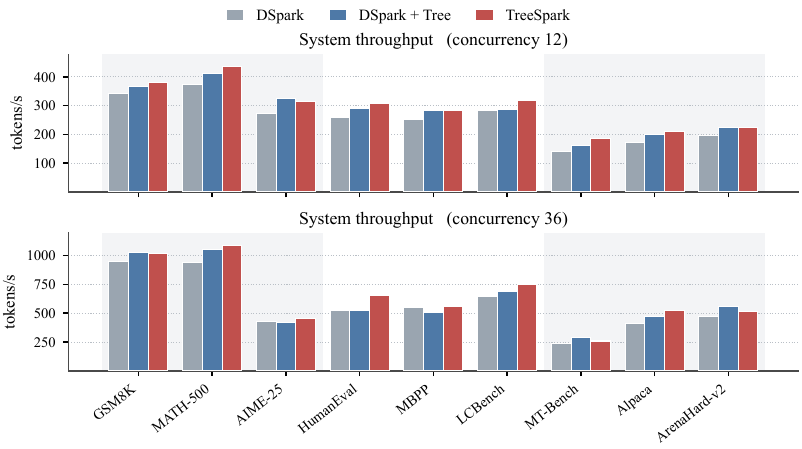}
 \caption{\textbf{System throughput by benchmark.} Enlarged view of the system measurements in Figure~\ref{fig:throughput-detail} under the serving configurations in Section~\ref{sec:setup}.}
 \label{fig:throughput-system-wide}
\end{figure}

\begin{figure}[t]
 \centering
 \includegraphics[width=\linewidth]{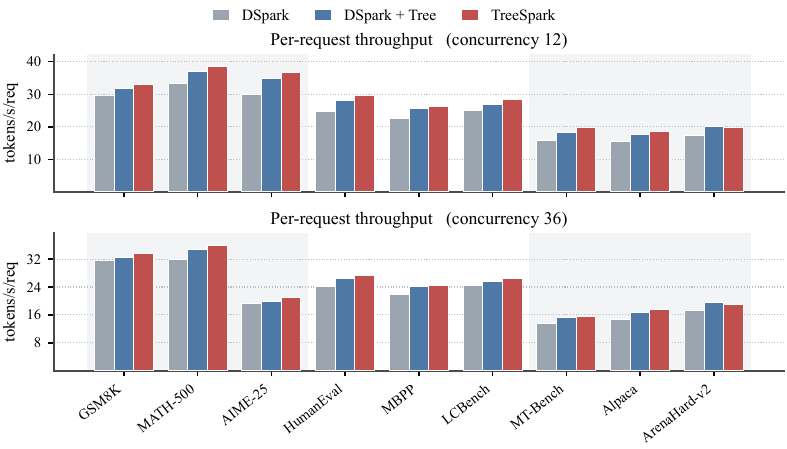}
 \caption{\textbf{Per-request throughput by benchmark.} Enlarged view of the per-request measurements in Figure~\ref{fig:throughput-detail}, averaged over request decoding intervals after the first output token.}
 \label{fig:throughput-request-wide}
\end{figure}

\paragraph{Evaluation scope.}
The experiments assess the proposed extension of parallel drafting with two Qwen3 target sizes, greedy decoding, and a fixed candidate budget. Comparisons with DFlash and DSpark measure complete-system efficiency; the reference-trained branching control and ablations assess the contribution of training alignment and capacity. Greedy output preservation is established under the verification assumptions in Appendix~\ref{app:verification}. Generalization across budgets and stochastic sampling remains to be evaluated.

\section{AI Assistance Disclosure}

During the preparation of this manuscript, we used AI as an assistive tool for software development and academic writing. Specifically, it was used to assist with the implementation of parts of the proposed method and experimental pipeline, including drafting, revising, and debugging code. It was also used to assist with drafting and editing portions of the manuscript and to improve the clarity, readability, grammatical correctness, and overall quality of the English writing. All AI-generated or AI-assisted code was carefully reviewed, tested, and validated by the authors, and all AI-assisted text was reviewed and revised before inclusion in the manuscript.

Generative AI tools were not used to independently formulate the research questions or hypotheses, determine the research methodology or experimental design, interpret the experimental results, or derive the scientific conclusions of this work. All methodological decisions, experimental analyses, and scientific claims were made and verified by the authors. We take full responsibility for the correctness, originality, and final content of this work, including any text, code, or other material produced with the assistance of generative AI.

\end{document}